\documentclass[11pt]{article}

\usepackage{bbm}
\usepackage{amsmath,amsfonts,amssymb,enumerate}
\usepackage{color}
\usepackage{lscape}
\setcounter{page}{1}
\usepackage{latexsym}
\usepackage{multirow}
\usepackage{rotating}
\usepackage{threeparttable}
\usepackage{graphicx}
\usepackage{algpseudocode,algorithm}
\usepackage[margin=1in]{geometry}

\numberwithin{equation}{section}
\newtheorem{theorem}{\bf Theorem}[section]

\newtheorem{corollary}[theorem]{\bf Corollary}


\def \bR {\mathbb{R}}
\def \bN {\mathbb{N}}
 
\def\DD{\mathbb{D}} 

\def\cC{{\cal{C}}}

\def\cG{{\cal{G}}}
\def\cH{{\cal{H}}}

\def\cN{{\cal{N}}}

\def\bbb{{\bf{b}}}

\def\ff{{\bf{f}}}
\def\bg{{\bf{g}}}

\def\be{{\bf{e}}}
\def\bx{{\bf{x}}}
\def\by{{\bf{y}}}

\def\b0{{\bf{0}}}

\def\bA{{\bf{A}}}

\def\bW{{\bf{W}}}

\newenvironment{proof}{\noindent{\em Proof:}}{\quad \hfill$\Box$\vspace{2ex}}

\graphicspath{{figs/}}

\begin{document}



\title{\bf Multi-Grade Deep Learning}
\author{Yuesheng Xu\thanks{Department of Mathematics and Statistics, Old Dominion University, Norfolk, VA 23529, USA. E-mail address: {\it y1xu@odu.edu}. }}
\date{}
\maketitle

\maketitle
\begin{abstract}
Deep learning requires solving a nonconvex optimization problem of a large size to learn a deep neural network. The current deep learning model is of a {\it single-grade}, that is, it learns a deep neural network by solving a single nonconvex optimization problem. When the layer number of the neural network is large, it is computationally challenging to carry out such a task efficiently. The complexity of the task comes from learning all weight matrices and bias vectors from one single nonconvex optimization problem of a large size. 
Inspired by the human education process which arranges learning in grades, we propose a multi-grade learning model: Instead of solving one single optimization problem of a large size, we successively solve a number of optimization problems of small sizes, which are organized in grades, to learn a shallow neural network for each grade. Specifically, the current grade is to learn the leftover from the previous grade. In each of the grades, we learn a shallow neural network stacked on the top of the neural network, learned in the previous grades, which remains unchanged in training of the current and future grades. By dividing the task of learning a deep neural network into learning several shallow neural networks, one can alleviate the severity of the nonconvexity of the original optimization problem of a large size. When all grades of the learning are completed, the final neural network learned is  a {\it stair-shape} neural network, which is the {\it superposition} of networks learned from all grades. Such a model enables us to learn a deep neural network much more effectively and efficiently.
Moreover, multi-grade learning naturally leads to adaptive learning.
We prove that in the context of function approximation if the neural network generated by a new grade is nontrivial, the optimal error of the grade is strictly reduced from the optimal error of the previous grade. Furthermore, we provide several proof-of-concept numerical examples which demonstrate that the proposed multi-grade model outperforms significantly the traditional single-grade model and is much more robust than the traditional model.
\end{abstract}

\noindent{\bf Keywords:}
multi-grade deep learning, deep neural network, adaptive learning

\section{Introduction}
The immense success of deep learning (LeCun et al., 2015; Goodfellow et al., 2016) has been widely recognized. Its great impact to science and technology has been witnessed (H\"aggstr\"om et al., 2019;  Krizhevsky et al., 2012; Shen et al., 2017; Torlai et al., 2018) and will continue to deepen. From a mathematical perspective, such successes are mainly due to the powerful expressiveness of deep neural networks in representing a function (Daubechies et al., 2022; Shen et al., 2021). The deep neural networks are learned by solving optimization problems which determine their parameters (weight matrices and bias vectors) that define them with activation functions. The optimization problems are highly nonconvex and have large numbers of parameters. Solving such optimization problems is challenging. Due to having large numbers of parameters, finding a global minimizer is difficult. By employing the stochastic gradient descent method (Bottou 1998; Bottou et al., 2012; Kingma and Ba, 2015) to solve the optimization problems, most likely only local minimizers may be found, since gradient-based optimization starting from random initialization appears to often get stuck in poor solutions (Bengio et al., 2007). Moreover, convergence of the iteration is very slow. 
This has been recognized as a major computational obstacle of deep learning.

The current model of deep learning uses a {\it single} optimization problem to train all parameters needed for the desired deep neural network. We will refer it to as the single-grade model. When solving a single-grade learning model, the more layers the neural network possesses, the severer its nonconvexity is, and thus, more difficulty we would encounter when trying to solve it. The single-grade deep learning model is like to teach a middle school student to learn Calculus, who has no required preliminary knowledge such as elementary algebra, trigonometry or college algebra. Even if it is not an impossible mission, it is extremely difficult.


The goal of this study is to address this challenge. Inspired by human learning process, we propose multi-grade deep learning models in which deep neural networks are learned grade-by-grade from shallow to depth, with redundancy. Instead of solving one single optimization problem with a big number of parameters, we will solve several optimization problems, each with a significantly smaller number of parameters which determine a shallow neural network. After several grades of learning, we build a deep neural network with a structure different from the one learned by the single-grade learning but with a comparable approximation accuracy (or even better). The neural network learned from the proposed multi-grade learning model is the superposition of {\it stairs-shape} neural networks. Since often it is easier to learn several shallow neural networks than a deep one, such models enable us to learn deep neural networks much more effectively and efficiently, avoiding solving one single optimization problem with a large number of parameters. 

The proposed multi-grade deep learning model imitates the human learning process. In our modern society, human education is often organized in five large stages, which include elementary school, middle school, high school, college and graduate school. Each of these stages of schooling is further divided into different grades. For example, elementary school is normally divided into five grades and its curriculums are designed according to the grades, with substantial amount of redundancy, proceeding from the surface to the inner essence, from an easy level to a sophisticated level. The knowledge that students had learned in a current grade would serve as a basis to learn new knowledge in the next grade. At the end of each grade, there are examinations to ensure that students do learn what are required to know. The human learning experience shows that the multi-grade learning is effective and efficient.



At the technical level, the development of multi-grade learning models is influenced by the multilevel augmentation method for solving nonlinear system of a large size which results from discretization of Hammerstein equations, a class of nonlinear integral equations, (Chen et al., 2009), see also (Chen et al., 2015).  Solving a Hammerstein equation by projecting it onto a piecewise polynomial subspace that has a multiscale structure including higher levels of resolutions boils down to solving a discrete nonlinear system of a large size. The more levels of resolutions are included in the subspace, the higher the accuracy of the approximate solution has. However,
solving a nonlinear system of a large size is computational expansive. With the help of the multiscale analysis (Chen et al., 1999; Daubechies, 1992; Micchelli and Xu, 1994), we reformulated the process of solving the nonlinear system of a large size into two major steps: First, we solve a nonlinear system which result from projecting the Hammerstein equation onto a subspace that contains only lower resolution levels. We then compensate the error by solving a linear system corresponding to the subspace that contains high resolution levels, and this two-step process is repeated. With this reformulation, we avoid solving a nonlinear system of a large size, and instead, we solve a nonlinear system of a small size several times with subsidizing by solving linear systems of large sizes. It was proved in (Chen et al., 2009) that this method generates an approximate solution having the same accuracy order as solving the original nonlinear system of a large size, but with significantly less computational costs.  Inspired by such an idea, we propose in this paper to reformulate the optimization problem for learning the entire neural network as many optimization problems for learning parts of a deep neural network. These optimization problems are arranged in grades in the way that learning of the current grade is to abstract information from the {\it remainder} of learning of the previous grade. In each of the grades, we solve an optimization problem of a significantly smaller size, which is much easier to solve than the original entire problem. The present grade is to learn from the leftover of the previous grade. In fact, according to (Wu et al., 2005; Wu and Xu, 2002) under certain conditions, online gradient methods for training a class of nonlinear feedforward shallow neural networks has guaranteed deterministic convergence. The multi-grade learning model takes this advantage by training a number of {\it shallow} neural networks which is a substantially easier task, avoiding training a {\it deep} neural network which is a much difficult task.

%

A deep neural network learned by a multi-grade learning model  differs significantly from deep neural networks learned by the existing learning models. In a multi-grade learning model, each grade updates the knowledge gained from learning of the previous grades and the total knowledge learned up to the current grade is the accumulation of the knowledge learned in {\it all} previous grades plus the update in the current grade. Mathematically, the deep neural network learned up to the current grade is the superposition of all the neural networks learned in all grades so far, (see, Figure \ref{TwoGradeLearningModel} for an illustration). Such a neural network has great redundancy which enhances the expressiveness in presenting data/functions.

The multi-grade learning is suitable for adaptive computation. It naturally allows us to add a new shallow neural network to the neural network learned from the previous grades without changing it. In particular, the multi-grade learning model is natural for learning a solution of an operator equation such as a partial differential equation and an integral equation, especially when its solution has certain singularity, since it has a innate posterior error.

We organize this paper in seven sections. In section 2, we review the single-grade learning model.
Section 3 is devoted to the development of multi-grade models. We then establish in section 4 theoretical results that justify the proposed model. In section 5, we discuss crucial issues related to the implementation of the proposed multi-grade learning models. In section 6, we provide several proof-of-concept numerical examples. Finally, we make conclusive remarks in section 7.

\section{Deep Neural Networks: Single-Grade Learning}

In this section, we recall the definition of the standard single-grade deep learning model, and discuss its computational challenges.

Many real world problems require to learn a function from given data. Let $s, t\in \bN$. Suppose that we are given $m$ pairs of points $(\bx_i, \by_i)$, $i\in\bN_m:=\{1,2,\dots,m\}$, where $\bx_i\in\bR^s$ and $\by_i\in \bR^t$. We wish to learn from this set of data a function $\ff:\bR^s\to\bR^t$, which represents intrinsic information embedded in the data. Deep neural networks furnish us with an excellent representation of the function due to their gifted expressiveness as a result of their special structure.

We first recall the definition of a deep neural network.
A deep neural network of $n$ layers composed of $n-1$ hidden layers and one output layer is constructed with $n$ weight matrices $\bW_k$ and bias vectors $\bbb_k$, $k\in\bN_{n}$, through a pre-selected activation function $\sigma: \bR\to\bR$. Specifically, we suppose that $d\in\bN$, and for a vector $\bx:=[x_1, x_2,\dots, x_d]^\top\in\bR^d$, we define
the vector-valued function by the activation function
\begin{equation}\label{activationF}
\sigma(\bx):=[\sigma(x_1),\dots,\sigma(x_d)]^\top.
\end{equation}
As in (Xu and Zhang, 2021), for $n$ vector-valued functions $f_k$, $k\in\bN_n$, such that the range of $f_k$ is contained in the domain of $f_{k+1}$, $k\in\bN_{n-1}$, the consecutive composition of $f_k$, $k\in\bN_n$, is denoted by
\begin{equation}\label{consecutive_composition}
    \bigodot_{k=1}^n f_k:=f_n\circ f_{n-1}\circ\cdots\circ f_2\circ f_1,
\end{equation}
whose domain is that of $f_1$.
For  $\bW_i\in\bR^{m_i\times m_{i-1}}$,  with $m_0:=s$, $m_n:=t$ and $\bbb_i\in\bR^{m_i}$, $i\in\bN_n$, 
a deep neural network is a function defined by
\begin{equation}\label{DNN}
    \cN_n(\bx):=\left(\bW_n\bigodot_{i=1}^{n-1} \sigma(\bW_i \cdot+\bbb_i)+\bbb_n\right)(\bx),\ \ \bx\in\bR^s.
\end{equation}
The $n$-th layer is the output layer.
Clearly, $\cN_n: \bR^s\to\bR^t$ is a vector-valued function.
From \eqref{DNN} and the definition \eqref{activationF}, we have the recursion
\begin{equation}\label{Step1}
    \cN_1(\bx):=\sigma(\bW_1 \bx+\bbb_1)
\end{equation}
and
\begin{equation}\label{Recursion}
    \cN_{k+1}(\bx)=\sigma(\bW_{k+1}\cN_k(\bx)+\bbb_{k+1}), \ \ \bx\in \bR^s, \ \ \mbox{for all} \ \ k\in \bN_{n-1}.
\end{equation}
Note that when $k:=n-1$, $\sigma$ in \eqref{Recursion} is viewed as the identity map.

We now return to learning a function from a given data set.
For $m$ pairs of given data points  $(\bx_i, \by_i)$, $i\in\bN_m$,
one can learn a function 
\begin{equation}\label{TraditionalDNN}
  \cN_n(\bx):= \cN_n(\{\bW_j^*,\bbb_j^*\}_{j=1}^n;\bx),\ \ \bx\in\bR^s
\end{equation}
by solving the parameters $\{\bW_j^*, \bbb_j^*\}_{j=1}^n$ with $\bW_j^*\in\bR^{m_j\times m_{j-1}}$ and $\bbb_j^*\in\bR^{m_j}$ from the minimization problem
\begin{equation}\label{Basic-Min-Problem}
     \min\left\{\sum_{k=1}^m\|\cN_n(\{\bW_j,\bbb_j\}_{j=1}^n; \bx_k)-\by_k\|_{\ell_2}^2: \bW_j\in\bR^{m_j\times m_{j-1}}, \bbb_j\in\bR^{m_j}, j\in\bN_{n}\right\},
\end{equation}
where $\|\cdot\|_{\ell_2}$ denotes the Euclidean vector norm of $\bR^t$. When we actually solve  problem \eqref{Basic-Min-Problem} we may need to add an appropriate regularization term if 
overfitting occurs. We postpone this issue until later so that we can concentrate on crucial conceptual issues.

Learning problem \eqref{Basic-Min-Problem} has a continuous counterpart in the context of function approximation. 
For a vector-valued function $\bg:=[g_1,\dots, g_t]^\top: \DD\subseteq\bR^s\to\bR^t$, we define 
$$
\|\bg\|:=\left[\sum_{j=1}^t \|g_j\|_2^2\right]^\frac{1}{2},
\ \
\mbox{where}\ \
\|g_j\|_2:=\left(\int_\DD|g_j(\bx)|^2d\bx\right)^\frac12.
$$
By $L_2(\DD,\bR^t)$ we denote the Hilbert space of the functions $\bg:\DD\to\bR^t$ with $\|\bg\|<\infty$. The inner-product of the space $L_2(\DD,\bR^t)$ is defined by
$$
\left<\ff,\bg\right>:=\sum_{j=1}^t\int_\DD f_j(\bx)g_j(\bx)d\bx, \ \ \mbox{for}\ \ \ff,\bg\in L_2(\DD,\bR^t).
$$
Below, we describe a continuous version of learning problem  \eqref{Basic-Min-Problem}. Given a function $\ff\in L_2(\DD,\bR^t)$, we wish to learn a deep neural network $\cN_n$ in the form of \eqref{TraditionalDNN} by solving $\{\bW_j^*,\bbb_j^*\}_{j=1}^n$ from the continuous minimization problem
\begin{equation}\label{Basic-Min-Problem-cont}
\min\left\{\|\ff(\cdot)-\cN_n(\{\bW_j,\bbb_j\}_{j=1}^n;\cdot)\|^2: \bW_j\in\bR^{m_j\times m_{j-1}}, \bbb_j\in\bR^{m_j}, j\in\bN_{n}\right\}.
\end{equation}
The function $\cN_n(\{\bW_j,\bbb_j\}_{j=1}^n;\cdot)$ is called a best approximation from the set $\Omega_n$ of deep neural networks having the form \eqref{DNN} to $\ff$. Note that the set $\Omega_n$ is a nonconvex closed subset of $L_2(\DD,\bR^t)$.

Learning a deep neural network from either discrete data or a continuous function boils down to finding the optimal weight matrices and bias vectors by solving minimization problem \eqref{Basic-Min-Problem} or \eqref{Basic-Min-Problem-cont}. 
The {\it single-grade} learning model learns the parameters of all layers together in one single grade. 
It is like to ask a college student to learn Linear Algebra without any backgrounds in High School Algebra or College Algebra, and learn all of these courses all together. Or image how difficult will be for a college freshman who has no College Algebra or Trigonometry knowledge to learn Calculus.
Computational challenges in solving these minimization problems  come  from their severe nonconvexity, which is the result of learning all parameters at once, besides their ill-posedness.  
Due to their severe nonconvexity, often their global minimizers cannot be found. Even using the mainstream method the stochastic gradient descent (Bottou 1998; Bottou and Bousquet, 2012) in solving minimization problems of this type, it is time consuming. Training a deep neural network in high dimensions requires even much more computing time.  

After a deep neural network is learned, if we realize that its accuracy is not satisfactory, then we need to train a new deep neural network with more layers. In this case, we have to start it over to train the new one in the single-grade learning model. This is not computationally efficient. One should have a model which allows updating the learned neural network to form a new one without training a complete new neural network. The multi-grade learning model to be proposed in the next section will address these issues.


\section{Multi-Grade Learning Models}
 
Motivated by the human education system, we introduce in this section multi-grade learning models to learn 
deep neural networks. 

We first consider learning a neural network that approximates $\ff\in L_2(\DD,\bR^t)$. Instead of learning it by the single-grade model \eqref{Basic-Min-Problem-cont}, we organize the ``curriculum'' in $l$ grades and allow the system to learn it in a multi-grade manner. Specifically, we choose $k_j\in \bN$, $j\in\bN_l$ so that $n-1=\sum_{j=1}^lk_j$. For each $k_j$, we choose a set of matrix widths $\{m_k: k=0,1,\dots,k_j\}$, which may be different for different $k_j$, and $m_{k_j}=t$. For simplicity, our notation does not indicate the dependence of the matrix widths $m_k$ on $k_j$. 

We first describe learning in grade 1. The goal of the first grade learning is to learn the neural network $\cN_{k_1}$ that takes the form of \eqref{DNN} with $n:=k_1$. To this end, we define the error function of grade 1 by
\begin{equation} \label{error1-G}
\be_1(\{\bW_j,\bbb_j\}_{j=1}^{k_1};\bx):=\ff(\bx)-\cN_{k_1}(\{\bW_j,\bbb_j\}_{j=1}^{k_1};\bx), \ \ \bx\in\bR^s,
\end{equation}
where $\{\bW_j,\bbb_j\}_{j=1}^{k_1}$ are parameters to be learned.
We find the parameters $\{\bW_{1,j}^*,\bbb_{1,j}^*\}_{j=1}^{k_1}$ from the optimization problem
\begin{equation}\label{min1-G}
\min\{\|\be_1(\{\bW_j,\bbb_j\}_{j=1}^{k_1};\cdot)\|^2: \bW_j\in\bR^{m_j\times m_{j-1}}, \bbb_j\in\bR^{m_j}, j\in\bN_{k_1}\},
\end{equation}
with $m_0:=s$. Once the optimal parameters $\{\bW_{1,j}^*,\bbb_{1,j}^*\}_{j=1}^{k_1}$ are learned, we let
$$
\ff_1(\bx)=\cN^*_{k_1}(\bx):=\cN_{k_1}(\{\bW_j^*,\bbb_j^*\}_{j=1}^{k_1};\bx),\ \ \bx\in\bR^s,
$$
which is the ``knowledge'' about $\ff$ that we have learned in grade 1 and we define the optimal error of grade 1 by setting
$$
\be^*_1(\bx):=\ff(\bx)-\ff_1(\bx),\ \ \mbox{for}\ \ \bx\in\bR^s.
$$
Usually, $\|\be^*_1\|$ is not small and thus the learning will continue. 

In grade 2, we will learn a shallow neural network on the top of $\cN^*_{k_1}$ from $\be^*_1$, which is the leftover from learning of grade 1. Specifically, we define the error function of grade 2 by
\begin{equation}\label{error2-G}
    \be_2(\{\bW_j, \bbb_j\}_{j=1}^{k_2};\bx):=\be_1^*(\bx)-(\cN_{k_2}(\{\bW_j,\bbb_j\}_{j=1}^{k_2};\cdot)\circ\cN^*_{k_1})(\bx),\ \ \bx\in\bR^s,
\end{equation}
where $\cN^*_{k_1}$ has been learned in grade 1 and its parameters will be fixed in learning of the future grades. The shallow neural network $\cN_{k_2}$, which has the form \eqref{DNN} with $n:=k_2$, will be learned in grade 2.
We find $\{\bW_{2,j}^*, \bbb_{2,j}^*\}_{j=1}^{k_1}$ from the optimization problem
\begin{equation}\label{min2-GG}
\min\{\|\be_2(\{\bW_{j}, \bbb_{j}\}_{j=1}^{k_2};\cdot)\|^2: \bW_j\in\bR^{m_j\times m_{j-1}}, \bbb_j\in\bR^{m_j},  j\in\bN_{k_2}\},
\end{equation}
with $m_0:=t$ and $m_{k_2}:=t$,  and we let
$$
\ff_2(\bx)=(\cN^*_{k_2}\circ\cN^*_{k_1})(\bx):=(\cN_{k_2}(\{\bW_{2,j}^*, \bbb_{2,j}^*\}_{j=1}^{k_1};\cdot)\circ\cN^*_{k_1})(\bx),\ \ 
\bx\in\bR^s.
$$
Note that $\ff_2$ is the newly learned neural network $\cN_{k_2}^*$ stacked on the top of the neural network $\cN_{k_1}^*$ learned in the previous grade. The optimal error of grade 2 is defined by
$$
\be^*_2(\bx):=\be^*_1(\bx)-\ff_2(\bx),\ \ \mbox{for}\ \ \bx\in\bR^s.
$$

Suppose that the neural networks $\cN^*_{k_i}$ of grades $i$, for $i\in\bN$,
have been learned and the optimal error of grade $i$ is given by $\be_{i}^*$.
We define the error function of grade $i+1$ by
$$
\be_{i+1}(\{\bW_j, \bbb_j\}_{j=1}^{k_{i+1}};\bx):=\be_{i}^*(\bx)-(\cN_{k_{i+1}}(\{\bW_j, \bbb_j\}_{j=1}^{k_{i+1}};\cdot)\circ\cN^*_{k_{i}}\circ\cdots\circ\cN^*_{k_1})(\bx),\ \ \bx\in\bR^s,
$$
where $\cN_{k_{i+1}}$ is a neural network to be learned in grade $i+1$ and it has the form \eqref{DNN} with $n:=k_{i+1}$.
We then find $\{\bW_{i+1,j}^*, \bbb_{i+1,j}^*\}_{j=1}^{k_{i+1}}$ from the optimization problem
\begin{equation}\label{minl-G}
\min\{\|\be_{i+1}(\{\bW_{j}, \bbb_{j}\}_{j=1}^{k_{i+1}};\cdot)\|^2: \bW_j\in\bR^{m_j\times m_{j-1}}, \bbb_j\in\bR^{m_j},  j\in\bN_{k_{i+1}}\},
\end{equation}
with $m_0:=t$ and $m_{k_{i+1}}:=t$, and we let
$$
\ff_{i+1}(\bx):=(\cN^*_{k_{i+1}}\circ\cN^*_{k_{i}}\circ\cdots\circ\cN^*_{k_1})(\bx), \ \ \bx\in\bR^s,
$$
where $\cN^*_{k_{i+1}}:=\cN_{k_{i+1}}(\{\bW_{i+1,j}^*, \bbb_{i+1,j}^*\}_{j=1}^{k_{i+1}};\cdot)$. Clearly, $\ff_{i+1}$ is a best approximation from the set 
\begin{equation}\label{Def-Omega(i+1)}
    \Omega_{i+1}:=\{\cN_{k_{i+1}}(\{\bW_j, \bbb_j\}_{j=1}^{k_{i+1}};\cdot)\circ\cN^*_{k_{i}}\circ\cdots\circ\cN^*_{k_1}: \bW_j\in\bR^{m_j\times m_{j-1}}, \bbb_j\in\bR^{m_j},  j\in\bN_{k_{i+1}}\}
\end{equation}
to $\be^*_i$.
Again, $\ff_{i+1}$ is the newly learned neural network $\cN_{k_{i+1}}^*$ stacked on the top of the neural network $\cN_{k_{i}}^*\circ\cdots\circ\cN_{k_1}^*$
learned in the previous grades. The optimal error of grade $i+1$ is defined by
$$
\be^*_{i+1}(\bx):=\be^*_{i}(\bx)-\ff_{i+1}(\bx), \ \ \mbox{for}\ \ \bx\in\bR^s.
$$
Finally, the $l$ grade learning model generates the neural network
\begin{equation}\label{Final-NN}
    \overline{\ff}_l:=\sum_{i=1}^l\ff_i.
\end{equation}
Unlike the neural network $\cN_n$ learned by \eqref{Basic-Min-Problem-cont}, the neural network $\overline{\ff}_l$ defined by \eqref{Final-NN} learned by the $l$-grade model is the accumulation of the  knowledge learned from all the grades. In each grade, the system learns based on the knowledge gained from learning of the previous grades. Mathematically, the neural network $\overline{\ff}_l$ is the superposition of all $l$ networks $\ff_i$, $i\in\bN_l$, learned in $l$ grades, and each $\ff_i$ is a shallow network learned in grade $i$ composed with the shallow networks learned from the previous grades. In general, the neural network $\overline{\ff}_l$ has a stairs-shape. We illustrate a three grade learning model in  Figure \ref{TwoGradeLearningModel}. Note that the neural network that results from a three grade learning model is the sum of three networks: the network (left) learned from grade 1 plus the network (center) learned from grade 2, which is stacked on the top of the network learned in grade 1, and plus the network (right) learned from grade 3, which is stacked on the top of the network learned in grade 2. The parts, bounded by dash lines, of the networks of grades 2 and 3 are the exact copy of the networks learned from the previous grades. They keep unchanged in training of a new grade.

\begin{figure}[H]
\centering
\includegraphics[width=0.8\textwidth, height=0.45\textwidth]{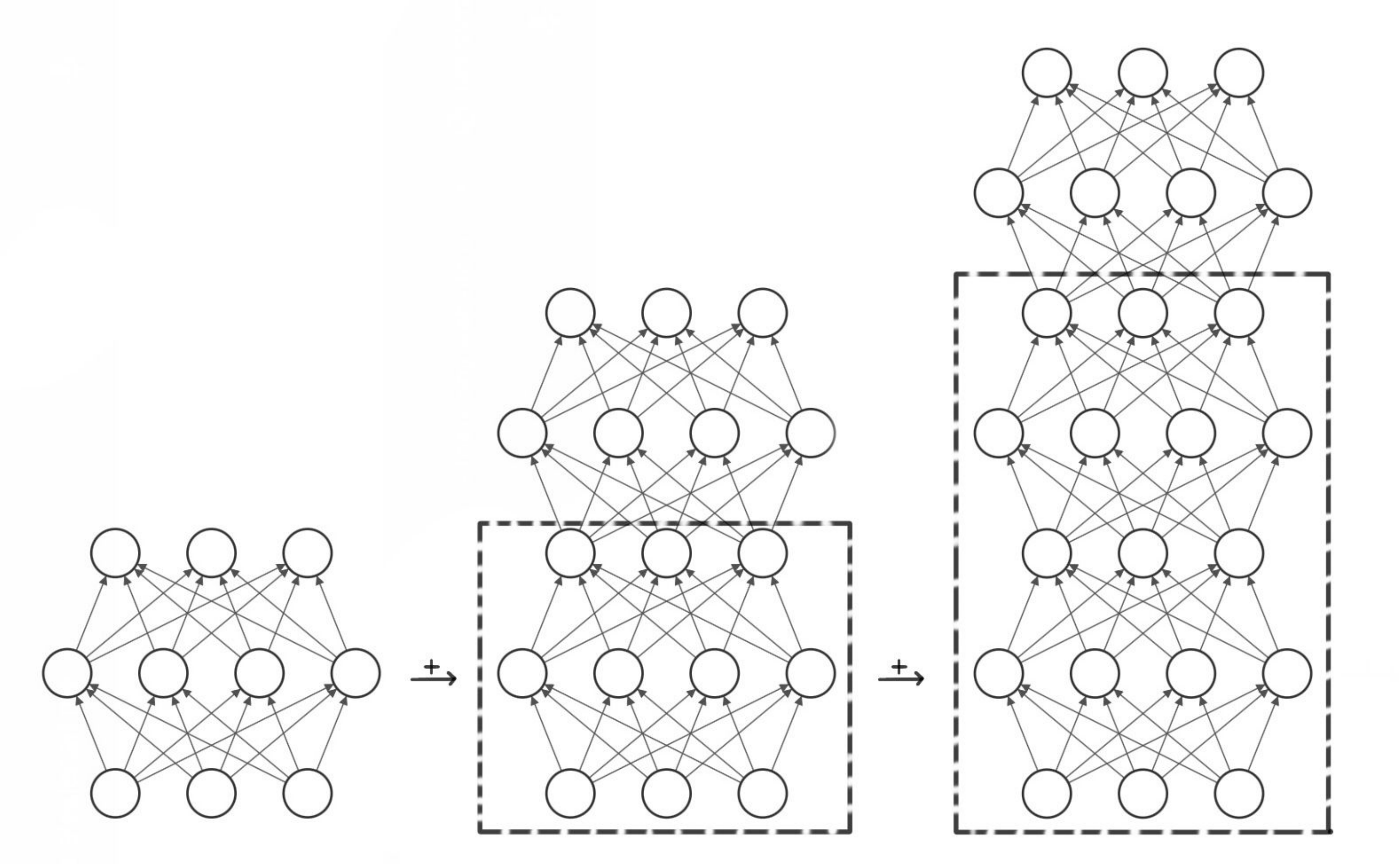}
\caption{Illustration of a three-grade learning model: The stairs-shape network - the superposition of three networks.}\label{TwoGradeLearningModel}
\end{figure}

From the structure of the neural network, we find it suitable for adaptive learning since when the norm of the optimal error of the current grade is not within a given tolerance we add a new grade without changing the neural networks learned in the previous grades.
In passing, we would like to point out that the neural network learned from the multi-grade learning model has the flavor of the adaptive mode decomposition originally introduced in (Huang et al., 1998), see also (Chen et al., 2006; Xu et al., 2006).

Learning a function from $m$ pairs of  discrete data points  $\{(\bx_i, \by_i)\}_{j=1}^m$ may be conducted in the same way, with the modification specified below. First of all,  the error functions are now defined by
\begin{equation*} \label{error1-GD}
\be_1(\{\bW_j, \bbb_j\}_{j=1}^{k_1};\bx_k):=\by_k-\cN_{k_1}  
(\{\bW_j,\bbb_j\}_{j=1}^{k_1};\bx_k), \ \ k\in \bN_m,
\end{equation*}
the optimal error of grade 1 by 
$$
\be^*_1(\bx_k):=\be_1(\{\bW_j^*, \bbb_j^*\}_{j=1}^{k_1};\bx_k), \ \ k\in\bN_m,
$$
and for $i=2,3,\dots,l$,
$$
\be_i(\{\bW_j, \bbb_j\}_{j=1}^{k_i};\bx_k):=\be_{i-1}^*(\bx_k)-(\cN_{k_{i}}
(\{\bW_j,\bbb_j\}_{j=1}^{k_1};\cdot)\circ\cN^*_{k_{i-1}}\circ\cdots\circ\cN^*_{k_1})(\bx_k),
\ \ k\in \bN_m,
$$
and the optimal error of grade $i$ by
$$
\be^*_i(\bx_k):=\be_i(\{\bW_j^*, \bbb_j^*\}_{j=1}^{k_i};\bx_k), \ \ k\in\bN_m.
$$
The $L_2$-norm square in the minimization problems \eqref{min1-G}, \eqref{min2-GG} and \eqref{minl-G} is replaced by the discrete forms 
\begin{equation}\label{Error-Discrete1}
    \|\be_1(\{\bW_j, \bbb_j\}_{j=1}^{k_1};\cdot)\|_m^2:=\sum_{k=1}^m\|\by_k-\cN_{k_1}
(\{\bW_j,\bbb_j\}_{j=1}^{k_1};\bx_k)\|_{\ell_2}^2
\end{equation}
and for $i=2,3,\dots,l$,
\begin{equation}\label{Error-Discretei}
\|\be_i(\{\bW_j, \bbb_j\}_{j=1}^{k_i};\cdot)\|_m^2:=\sum_{k=1}^m\|\be_{i-1}^*(\bx_k)-(\cN_{k_{i}}
(\{\bW_j,\bbb_j\}_{j=1}^{k_1};\cdot)\circ\cN^*_{k_{i-1}}\circ\cdots\circ\cN^*_{k_1})(\bx_k)\|_{\ell_2}^2.
\end{equation}
Everything else remains the same as the function approximation case.

Learning a solution of an operator equation, such as initial/boundary value problems of linear/nonlinear differential equations, integral equations and functional equations, can be carried out in the same way. Note that learning a solution of nonlinear differential equations by the single-grade learning model was recently considered by  (Raissi 2018; Xu and Zeng, 2023).
Let $\cG$ denote a nonlinear operator and we consider the operator equation
\begin{equation}\label{Operator-Equ}
    \cG(\ff)=0,
\end{equation}
where $\ff:\bR^s\to\bR^t$ is a solution to be learned. 
For an initial value problem or a boundary value problem of a differential equation, the operator $\cG$ includes the initial value conditions or/and the boundary value conditions.

We will learn a solution $\ff$ of equation \eqref{Operator-Equ} by the $l$-grade learning model. We define the error function of grade 1 by
$$
\be_i(\{\bW_j,\bbb_j\}_{j=1}^{k_1};\cdot):=\cG(\cN_{k_1}(\{\bW_j,\bbb_j\}_{j=1}^{k_1};\cdot)).
$$
We then find $\{\bW_{1,j}^*,\bbb_{1,j}^*\}_{j=1}^{k_1}$ by solving 
\begin{equation}\label{min1-op-G}
   \min
   \left\{\sum_{k=1}^m\|\be_i(\{\bW_j,\bbb_j\}_{j=1}^{k_1};\bx_k))\|_{\ell_2}^2: \bW_j\in\bR^{m_j\times m_{j-1}}, \bbb_j\in\bR^{m_j}, j\in\bN_{k_1}\right\},
\end{equation}
with $m_0:=s$ and $m_{k_1}:=t$, and obtain the knowledge of grade 1 for the solution of equation \eqref{Operator-Equ}, which has the form 
\begin{equation}\label{ff1-G}
    \ff_1:=\cN_{k_1}^*=\cN_{k_1}(\{\bW_{1,j}^*,\bbb_{1,j}^*\}_{j=1}^{k_1};\cdot).
\end{equation}
For $i=2,3,\dots,l$, we successively define the error function of grade $i$ by
$$
\be_i(\{\bW_j,\bbb_j\}_{j=1}^{k_i};\bx):=\cG\left(\sum_{\mu=1}^{i-1}\ff_\mu(\bx)+(\cN_{k_i}(\{\bW_j,\bbb_j\}_{j=1}^{k_i};\cdot)\circ\ff_{i-1}\circ\cdots\circ\ff_1)(\bx)\right)
$$
and find $\{\bW_{i,j}^*,\bbb_{i,j}^*\}_{j=1}^{k_i}$ by 
solving the minimization problems
\begin{equation}\label{min-i-op-G}
  \min
   \left\{\sum_{k=1}^m\left\|\be_i(\{\bW_j,\bbb_j\}_{j=1}^{k_i};\bx_k)\right\|_{\ell_2}^2: 
   \bW_j\in\bR^{m_j\times m_{j-1}}, \bbb_j\in\bR^{m_j}, j\in\bN_{k_i}\right\},
\end{equation}
with $m_0:=t$ and $m_{k_i}:=t$.
We let 
\begin{equation}\label{ffi-G}
    \ff_i:=\cN_{k_i}(\{\bW_{i,j}^*,\bbb_{i,j}^*\}_{j=1}^{k_i};\cdot)\circ\ff_{i-1}\circ\cdots\circ\ff_1, \ \ i=2,3,\dots l.
\end{equation}
We define the optimal error of grade $i$ by
$
\be_i^*:=\be_i(\{\bW_j^*,\bbb_j^*\}_{j=1}^{k_i};\bx).
$
The learning stops if $\|\be^*_i\|_{\ell_2}$ is within a given tolerant error bound or we reach the maximal grade number $l$.
When the $l$-grade learning process is completed, we obtain the neural network
\begin{equation}\label{FinalNetwork}
    \overline{\ff}_l:=\sum_{i=1}^l\ff_i,
\end{equation}
which serves as an approximate solution of the operator equation \eqref{Operator-Equ}.

The proposed multi-grade learning is particularly suitable for adaptive solutions of operator equations. In this case, we have a nature posterior error  $\|\be^*_i\|_{\ell_2}$ to control the stop of learning. Moreover, the multi-grade learning model allows us to add a new term $\ff_{i+1}$ to $\overline{\ff}_i$ if  $\|\be^*_i\|_{\ell_2}$ is not within the given tolerance, without starting over to learn the entire neural network.

The proposed multi-grade model works for other norms. As an example, we consider classification using the categorical cross entropy loss function (Brownlee 2019). Given data $\{(\bx_k,y_k)\}_{k=1}^m\subset \bR^s\times \bN_t$, where $\bx_k$ are observations and $y_k$ are labels, we wish to find a classifier $\cC: \bR^s\to\bN_t$ such that it correctly classifies the training data pairs and as well as new data points not in the given data set.
For a given true label $\by:=[y_1,\dots,y_t]^\top\in\bR^r$ and its prediction $\hat\by:=[\hat y_1,\dots, \hat y_t]^\top$, we define the individual categorical cross entropy by
\begin{equation}\label{Entropy}
    L(\by,\hat\by):=\sum_{k=1}^ty_k\ln (\hat y_k).
\end{equation}
The standard deep learning model for classification is to find $\bW_j^*\in\bR^{m_j\times m_{j-1}}$, $\bbb_j^*\in\bR^{m_j}$, $j\in\bN_{n}$ with $m_0:=s$ and $m_n:=t$ such that
\begin{equation}\label{classification}
\min
   \left\{-\frac{1}{m}\sum_{k=1}^mL(\by_k, \cN_n(\{\bW_j,\bbb_j\}_{j=1}^n; \bx_k)): 
   \bW_j\in\bR^{m_j\times m_{j-1}}, \bbb_j\in\bR^{m_j}, j\in\bN_{n}\right\},
\end{equation}
where $L$ is defined by \eqref{Entropy}.  When the dimension $t$ of the vector-valued prediction function is large, solving the nonconvex optimization problem  \eqref{classification} with a large number $n$ of layers is a challenging task if it is not impossible. The proposed multi-grade model can overcome this computational challenge since instead of solving one nonconvex optimization problem of a large size for one deep neural network, we solve nonconvex optimization problems of small sizes for several shallow neural networks.

We now describe the $l$-grade learning model for the classification problem. We first learn $\bW_{1,j}^*\in\bR^{m_j\times m_{j-1}}$, $\bbb_{1,j}^*\in\bR^{m_j}$, $j\in\bN_{k_1}$ with $m_0:=s$ and $m_{k_1}:=t$ such that
\begin{equation}\label{classification1}
   \min\left\{-\frac{1}{m}\sum_{k=1}^mL(\by_k, \cN_{k_1}(\{\bW_j,\bbb_j\}_{j=1}^{k_1}; \bx_k)): 
   \bW_j\in\bR^{m_j\times m_{j-1}}, \bbb_j\in\bR^{m_j}, j\in\bN_{k_1}\right\},
\end{equation}
and define prediction $\ff_1$ of grade 1 in the same way as in \eqref{ff1-G}.
For $i=2,3,\dots, l$, we define possible predictions of grade $i$ by
$$
\tilde \by_{i}(\{\bW_j,\bbb_j\}_{j=1}^{k_i};\bx_k) :=\sum_{\mu=1}^{i-1}\ff_\mu(\bx_k)+ (\cN_{k_i}(\{\bW_j,\bbb_j\}_{j=1}^{k_i};\cdot)\circ\ff_{i-1}\circ\cdots\circ\ff_1)(\bx_k).
$$
We then successively learn $\bW_{i,j}^*\in\bR^{m_j\times m_{j-1}}$, $\bbb_{i,j}^*\in\bR^{m_j}$, $j\in\bN_{k_i}$ with $m_0=m_{k_i}:=t$ such that
\begin{equation}\label{classification2}
   \min\left\{-\frac{1}{m}\sum_{k=1}^mL(\by_k,  \tilde \by_{i}(\{\bW_j,\bbb_j\}_{j=1}^{k_i};\bx_k)): 
   \bW_j\in\bR^{m_j\times m_{j-1}}, \bbb_j\in\bR^{m_j}, j\in\bN_{k_1}\right\},
\end{equation}
and define prediction $\ff_i$ of grade $i$ by setting 
$$
\ff_i(\bx):= 
(\cN_{k_i}(\{\bW_{i,j}^*,\bbb_{i,j}^*\}_{j=1}^{k_i};\cdot)\circ\ff_{i-1}\circ\cdots\circ\ff_1)(\bx), \ \ \bx\in\bR^s.
$$
When all $l$ grades of learning are completed, we obtain our prediction $\overline{\ff}_l$ which has the same form as in \eqref{FinalNetwork}.

\section{Analysis of the Multi-Grade Learning Model}

We analyze in this section the proposed multi-grade learning model. The main result is the optimal error of a multi-grade learning model decreases as the number of grades increases. To this end, we first present several useful observations regarding a best approximation from a nonconvex set, which are interesting in their own right. 

Let $\cH$ denote a Hilbert space and $\Omega$ a  closed set (not necessarily convex) in $\cH$. We wish to understand a best approximation from $\Omega$ to an $\ff\in\cH$.
We first present a necessary condition and a sufficient condition for an element to be a best approximation. To this end, we define a star-shape set centered at a given $\bg_0\in\Omega$ by
\begin{equation}\label{Set-Omega}
    \Omega(\bg_0):=\{\bg\in\Omega: \lambda \bg+(1-\lambda)\bg_0\in\Omega \ \mbox{for all}\ \lambda\in(0,1)\}.
\end{equation}
Clearly, a point $\bg$ of $\Omega$ is in $\Omega(\bg_0)$ if and only if the line segment connecting the two points $\bg_0$ and $\bg$ is completely located in $\Omega$.

\begin{theorem}\label{nonconvex-aproximation}
Suppose that $\cH$ is a Hilbert space and $\Omega$ is a  closed set in $\cH$. Let $\ff\in\cH\setminus \Omega$.

(i) If $\bg_0\in\Omega$ is a best approximation from $\Omega$ to $\ff$, then for all $\bg\in\Omega(\bg_0)$, 
\begin{equation}\label{Angle-Condition4.1}
    \left<\ff-\bg_0, \bg-\bg_0\right>\leq 0.
\end{equation}

(ii) If $\bg_0\in\Omega$ and \eqref{Angle-Condition4.1} holds for all $\bg\in\Omega$,  then $\bg_0$ is a best approximation from $\Omega$ to $\ff$.
\end{theorem}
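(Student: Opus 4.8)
The plan is to treat the two parts separately, exploiting the elementary expansion of $\|\ff-\bg\|^2$ in the inner product, exactly as in the classical convex case but with the star-shape set $\Omega(\bg_0)$ playing the role that convexity plays there. The key structural point I want to highlight is the deliberate asymmetry between (i) and (ii): the necessary condition can only be asserted for directions $\bg\in\Omega(\bg_0)$ along which a whole segment stays in $\Omega$, whereas the sufficient condition must be imposed over all of $\Omega$; this asymmetry is unavoidable because $\Omega$ is not convex.

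For part (i), I would fix an arbitrary $\bg\in\Omega(\bg_0)$ and, for $\lambda\in(0,1)$, set $\bg_\lambda:=\lambda\bg+(1-\lambda)\bg_0=\bg_0+\lambda(\bg-\bg_0)$, which lies in $\Omega$ by the definition \eqref{Set-Omega} of $\Omega(\bg_0)$. Since $\bg_0$ is a best approximation, $\|\ff-\bg_0\|^2\le\|\ff-\bg_\lambda\|^2$. Expanding the right-hand side gives
\begin{equation*}
\|\ff-\bg_0\|^2\le\|\ff-\bg_0\|^2-2\lambda\left<\ff-\bg_0,\bg-\bg_0\right>+\lambda^2\|\bg-\bg_0\|^2,
\end{equation*}
so that $\left<\ff-\bg_0,\bg-\bg_0\right>\le\tfrac{\lambda}{2}\|\bg-\bg_0\|^2$ for every $\lambda\in(0,1)$. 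Letting $\lambda\to0^+$ yields \eqref{Angle-Condition4.1}.

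For part (ii), I would take an arbitrary $\bg\in\Omega$ and write $\ff-\bg=(\ff-\bg_0)+(\bg_0-\bg)$, so that
\begin{equation*}
\|\ff-\bg\|^2=\|\ff-\bg_0\|^2+2\left<\ff-\bg_0,\bg_0-\bg\right>+\|\bg_0-\bg\|^2.
\end{equation*}
The hypothesis \eqref{Angle-Condition4.1} (applied with this $\bg$) says $\left<\ff-\bg_0,\bg-\bg_0\right>\le0$, i.e.\ $\left<\ff-\bg_0,\bg_0-\bg\right>\ge0$; hence $\|\ff-\bg\|^2\ge\|\ff-\bg_0\|^2+\|\bg_0-\bg\|^2\ge\|\ff-\bg_0\|^2$, which shows $\bg_0$ is a best approximation.

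There is no real analytic obstacle here; the only thing to be careful about is bookkeeping on which set the angle condition ranges over, and making sure that in (i) the segment $\{\bg_0+\lambda(\bg-\bg_0):\lambda\in(0,1)\}$ is legitimately available inside $\Omega$, which is precisely what membership in $\Omega(\bg_0)$ guarantees. The ``hard part,'' such as it is, is conceptual rather than technical: recognizing that $\Omega(\bg_0)$ is the correct replacement for $\Omega$ in the necessary direction, since without convexity one has no other perturbations of $\bg_0$ guaranteed to remain feasible.
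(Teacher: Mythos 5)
Your proposal is correct and takes essentially the same approach as the paper: in part (i) you use the same expansion of $\|\ff-\bg_\lambda\|^2$ along the segment that membership in $\Omega(\bg_0)$ guarantees (you let $\lambda\to 0^+$ directly, whereas the paper argues by contradiction with a suitably small $\lambda$), and in part (ii) you use the same elementary inner-product estimate (a direct expansion of $\|\ff-\bg\|^2$, where the paper splits $\|\ff-\bg_0\|^2$ and applies the Cauchy--Schwarz inequality). These are only cosmetic variations; there are no gaps.
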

\begin{proof}
(i) Assume that \eqref{Angle-Condition4.1} does not hold for all $\bg\in \Omega(\bg_0)$. Then, there exists some $\bg\in\Omega(\bg_0)$ such that 
$$
\left<\ff-\bg_0, \bg-\bg_0\right>>0.
$$
Since $\bg\in\Omega(\bg_0)$, by the definition of $\Omega(\bg_0)$, we observe that 
$$
\bg_\lambda:=\lambda \bg+(1-\lambda)\bg_0\in \Omega, \ \ \mbox{for all}\ \ \lambda\in (0,1).
$$
Hence, we have that
$$
\ff-\bg_\lambda=\ff-\bg_0-\lambda(\bg-\bg_0).
$$
This, with a direct computation, leads to 
$$
\|\ff-\bg_\lambda\|^2=
\|\ff-\bg_0\|^2-\lambda(2\left<\ff-\bg_0, \bg-\bg_0\right>-\lambda\|\bg-\bg_0\|^2).
$$
Upon choosing $\lambda$ to satisfy 
$$
0<\lambda<\min\left\{1, \frac{2\left<\ff-\bg_0, \bg-\bg_0\right>}{\|\bg-\bg_0\|^2} \right\},
$$
we obtain that $\|\ff-\bg_\lambda\|<\|\ff-\bg_0\|$. That is, $\bg_0$ is not a best approximation from $\Omega$ to $\ff$, a contradiction. The contradiction implies that  \eqref{Angle-Condition4.1} must hold for all $\bg\in \Omega(\bg_0)$.

(ii) By hypothesis, for all $\bg\in\Omega$, we observe that
$$
\begin{aligned}
     \|\ff-\bg_0\|^2&=\left<\ff-\bg_0,\ff-\bg\right>+\left<\ff-\bg_0,\bg-\bg_0\right>\\
    &\leq \left<\ff-\bg_0,\ff-\bg\right>\\
    &\leq  \|\ff-\bg_0\|\|\ff-\bg\|.
\end{aligned}
$$
This implies that 
$$
\|\ff-\bg_0\| \leq \|\ff-\bg\|,\ \  \mbox{for all}\ \ \bg\in\Omega.
$$
That is,  $\bg_0$ is a best approximation from $\Omega$ to $\ff$.
\end{proof}

Note that in general,  $\Omega(\bg_0)\neq \Omega$, and thus, the necessary condition stated in Theorem \ref{nonconvex-aproximation} is not the same as the sufficient condition. However, under some condition, they can be the same.
The following special result is a corollary of Theorem \ref{nonconvex-aproximation}.

\begin{corollary}\label{SpecialResult}
    Suppose that $\cH$ is a Hilbert space and $\Omega$ is a  closed set in $\cH$. Let $\ff\in\cH\setminus \Omega$. If  $\bg_0\in\Omega$ such that  $\Omega(\bg_0)= \Omega$,
then $\bg_0\in\Omega$ is a best approximation from $\Omega$ to $\ff$ if and only if 
\begin{equation}\label{Angle-Condition}
    \left<\ff-\bg_0, \bg-\bg_0\right>\leq 0, \ \ \mbox{for all} \ \ \bg\in\Omega(\bg_0).
\end{equation}
\end{corollary}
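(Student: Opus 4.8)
The plan is to obtain the corollary immediately from Theorem \ref{nonconvex-aproximation}, using the hypothesis $\Omega(\bg_0)=\Omega$ precisely as the device that collapses the gap between the necessary condition in part (i) and the sufficient condition in part (ii).

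For the ``only if'' direction I would assume that $\bg_0$ is a best approximation from $\Omega$ to $\ff$ and simply invoke Theorem \ref{nonconvex-aproximation}(i), which delivers $\left<\ff-\bg_0,\bg-\bg_0\right>\leq 0$ for every $\bg\in\Omega(\bg_0)$; this is exactly condition \eqref{Angle-Condition}. (The hypothesis $\Omega(\bg_0)=\Omega$ is not even needed for this implication.)

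For the ``if'' direction I would assume \eqref{Angle-Condition} holds, that is, $\left<\ff-\bg_0,\bg-\bg_0\right>\leq 0$ for all $\bg\in\Omega(\bg_0)$. Since $\Omega(\bg_0)=\Omega$ by hypothesis, this inequality in fact holds for all $\bg\in\Omega$, so that the premise of Theorem \ref{nonconvex-aproximation}(ii) --- namely \eqref{Angle-Condition4.1} for all $\bg\in\Omega$ --- is met, and that theorem then yields that $\bg_0$ is a best approximation from $\Omega$ to $\ff$.

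Since both implications reduce at once to the two halves of Theorem \ref{nonconvex-aproximation}, there is no genuine obstacle here; the only subtlety worth flagging is that the set over which the angle inequality is \emph{asserted} in (i) and the set over which it is \emph{required} in (ii) differ in general, and the condition $\Omega(\bg_0)=\Omega$ is exactly what makes them coincide. It may also be worth remarking that $\Omega(\bg_0)=\Omega$ holds in particular whenever $\Omega$ is convex, so that the corollary recovers the classical variational characterization of a best approximation from a convex set.
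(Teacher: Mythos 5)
Your proposal is correct and follows essentially the same route as the paper: the paper's own proof simply observes that under $\Omega(\bg_0)=\Omega$ the conditions in Items (i) and (ii) of Theorem \ref{nonconvex-aproximation} become identical, which is exactly the collapse you articulate (with the helpful extra remark that the hypothesis is only needed for the ``if'' direction).
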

\begin{proof}
When  $\Omega(\bg_0)= \Omega$, the two conditions in Items (i) and (ii) of Theorem \ref{nonconvex-aproximation} are identical, which gives a characterization of a best approximation in this special case.
\end{proof} 

In particular, when $\Omega$ is a convex set, for any $\bg_0\in\Omega$, we have that $\Omega=\Omega(\bg_0)$. Thus, Corollary \ref{SpecialResult} reduces to the well-known  characterization of a best approximation from a convex set, see, for example, (Deutsch 2001). For this reason, Theorem \ref{nonconvex-aproximation} is a natural extension of the characterization of a best approximation from a convex set to that from a nonconvex set. In fact,  Theorem \ref{nonconvex-aproximation} leads to a characterization of a best approximation from a nonconvex set, which we present next.
For a given $\ff\in\cH$ and a fixed $\bg_0\in\Omega$, we let $r_0:=\|\ff-\bg_0\|$ and define the open ball
$$
B(\ff,r_0):=\{\bg\in \cH: \|\ff-\bg\|<r_0\}.
$$
Moreover, for a given $\bg_0\in\Omega$, we let 
$$
\tilde\Omega(\bg_0):=\Omega\setminus \Omega(\bg_0)
$$
and 
$$
d(\ff, \tilde\Omega(\bg_0)):=\inf\{\|\ff-\bg\|:
    \bg\in\tilde\Omega(\bg_0)\}.
$$

\begin{theorem}\label{CharacterizationofBestApprox}
Suppose that $\cH$ is a Hilbert space and $\Omega$ is a  closed set in $\cH$. Let $\ff\in\cH\setminus \Omega$ and $\bg_0\in\Omega$.
The following statements are equivalent:

(i) The element $\bg_0$ is a best approximation from $\Omega$ to $\ff$.

(ii) There holds the equation
\begin{equation}\label{Characterization}
    B(\ff,r_0)\cap \Omega = \emptyset.
\end{equation}

(iii) The element $\bg_0$ satisfies the conditions
\begin{equation}\label{local-Angle-Condition}
    \left<\ff-\bg_0, \bg-\bg_0\right>\leq 0,\ \ \mbox{for all}\ \ 
    \bg\in\Omega(\bg_0)
\end{equation}
and
\begin{equation}\label{local-Angle-Condition2}
    \|\ff-\bg_0\|\leq d(\ff, \tilde\Omega(\bg_0)).
\end{equation}
\end{theorem}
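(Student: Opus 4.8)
The plan is to prove the three-way equivalence by establishing the chain (i) $\Leftrightarrow$ (ii), (ii) $\Rightarrow$ (iii), and (iii) $\Rightarrow$ (ii). The equivalence (i) $\Leftrightarrow$ (ii) should be immediate from unwinding definitions: $\bg_0$ is a best approximation from $\Omega$ to $\ff$ exactly when $\|\ff-\bg\|\geq r_0$ for every $\bg\in\Omega$, and this is precisely the assertion that $\Omega$ meets the open ball $B(\ff,r_0)$ in no point, i.e. \eqref{Characterization}. So the real content is the passage to condition (iii).

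For (ii) $\Rightarrow$ (iii), I would assume $B(\ff,r_0)\cap\Omega=\emptyset$. By the equivalence just noted, $\bg_0$ is then a best approximation, so Theorem \ref{nonconvex-aproximation}(i) hands us \eqref{local-Angle-Condition} for free. For \eqref{local-Angle-Condition2}, observe that $\tilde\Omega(\bg_0)\subseteq\Omega$, so every $\bg\in\tilde\Omega(\bg_0)$ already satisfies $\|\ff-\bg\|\geq r_0$; taking the infimum over such $\bg$ gives $d(\ff,\tilde\Omega(\bg_0))\geq r_0=\|\ff-\bg_0\|$, which is \eqref{local-Angle-Condition2}.

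For (iii) $\Rightarrow$ (ii), I would take an arbitrary $\bg\in\Omega$ and show $\|\ff-\bg\|\geq r_0$, splitting according to the partition $\Omega=\Omega(\bg_0)\cup\tilde\Omega(\bg_0)$. If $\bg\in\Omega(\bg_0)$, then \eqref{local-Angle-Condition} applies to this particular $\bg$, and the computation already carried out in the proof of Theorem \ref{nonconvex-aproximation}(ii) — writing $\|\ff-\bg_0\|^2=\left<\ff-\bg_0,\ff-\bg\right>+\left<\ff-\bg_0,\bg-\bg_0\right>\leq\left<\ff-\bg_0,\ff-\bg\right>\leq\|\ff-\bg_0\|\,\|\ff-\bg\|$ — yields $\|\ff-\bg\|\geq\|\ff-\bg_0\|=r_0$. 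If instead $\bg\in\tilde\Omega(\bg_0)$, then $\|\ff-\bg\|\geq d(\ff,\tilde\Omega(\bg_0))\geq r_0$ directly from \eqref{local-Angle-Condition2}. Either way $\bg\notin B(\ff,r_0)$, so \eqref{Characterization} holds.

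There is no genuine obstacle here; the argument is essentially bookkeeping plus reuse of the two computations inside Theorem \ref{nonconvex-aproximation}. The one conceptual point worth flagging is why the angle condition \eqref{local-Angle-Condition} does not by itself characterize a best approximation and the supplementary bound \eqref{local-Angle-Condition2} is genuinely necessary: \eqref{local-Angle-Condition} only constrains points of $\Omega$ that are ``visible'' from $\bg_0$ along a segment lying in $\Omega$, whereas nonconvexity allows points of $\tilde\Omega(\bg_0)$ to sit arbitrarily close to $\ff$; condition \eqref{local-Angle-Condition2} is exactly the hypothesis that excludes this. Keeping the two cases and the direction of each inequality straight is the only care required.
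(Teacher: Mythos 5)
Your proposal is correct and follows essentially the same route as the paper: the equivalence (i) $\Leftrightarrow$ (ii) by unwinding the definition of $B(\ff,r_0)$, and the passage to (iii) via Theorem \ref{nonconvex-aproximation} together with the partition $\Omega=\Omega(\bg_0)\cup\tilde\Omega(\bg_0)$. The only cosmetic difference is that for $\bg\in\Omega(\bg_0)$ you redo the inner-product/Cauchy--Schwarz computation directly rather than citing Theorem \ref{nonconvex-aproximation}(ii) applied to the set $\Omega(\bg_0)$ as the paper does, which is, if anything, slightly cleaner since $\Omega(\bg_0)$ need not itself be closed.
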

\begin{proof}
We first show that Items (i) and (ii) are equivalent. Suppose that equation \eqref{Characterization} holds. Let $\bg\in\Omega$ be arbitrary. Because of \eqref{Characterization}, 
we find that $\bg\notin B(\ff, r_0)$. Hence, we have that $\|\ff-\bg\|\geq r_0$. Equivalently, we find that 
\begin{equation}\label{BestA}
    \|\ff-\bg_0\|\leq \|\ff-\bg\|,\ \ \mbox{for all}\ \  \bg\in\Omega.
\end{equation}
That is, $\bg_0$ is a best approximation from $\Omega$ to $\ff$.
Conversely, suppose that $\bg_0$ is a best approximation from $\Omega$ to $\ff$. Then, inequality \eqref{BestA} holds. This implies that for all $\bg\in\Omega$, $\bg\notin B(\ff, r_0)$. That is, equation \eqref{Characterization} holds.

We next show that Items (i) and (iii) are equivalent.
Suppose that $\bg_0\in\Omega$ is a best approximation from $\Omega$ to $\ff$. By Item (i) of Theorem \ref{nonconvex-aproximation}, we have that \eqref{local-Angle-Condition}. Condition \eqref{local-Angle-Condition2} follows from the definition of $\bg_0$ being a best approximation from $\Omega$ to $\ff$ and the fact that $\tilde\Omega(\bg_0)\subseteq \Omega$. 
Conversely, suppose that  $\bg_0\in\Omega$ satisfies both \eqref{local-Angle-Condition} and \eqref{local-Angle-Condition2}. By Item (ii) of Theorem  \ref{nonconvex-aproximation}, $\bg_0$ is a best approximation from $\Omega(\bg_0)$ to $\ff$. That is, 
we have that 
$$
\|\ff-\bg_0\|\leq \|\ff-\bg\|,\ \ \mbox{for all}\ \ 
    \bg\in\Omega(\bg_0).
$$
This together with condition \eqref{local-Angle-Condition2} implies that 
$$
\|\ff-\bg_0\|\leq \|\ff-\bg\|,\ \ \mbox{for all}\ \ 
    \bg\in\Omega:=\Omega(\bg_0)\cup\tilde\Omega(\bg_0).
$$
In other words, $\bg_0$ is a best approximation from $\Omega$ to $\ff$.
\end{proof}

Figure \ref{Projection-to-NonconvexSet} illustrates the characterization of a best approximation from a nonconvex set to an element in a Hilbert space. As shown in the figure, $\Omega$ is a nonconvex set of $\cH$, $\ff$ is an element in $\cH$, not in $\Omega$, and $\bg_0\in \Omega$ is a best approximation from $\Omega$ to $\ff$, the shaded domain is the set $\Omega(\bg_0)$, and $\tilde\Omega(\bg_0)$ is the rest in $\Omega$ not in  $\Omega(\bg_0)$. Intuitively, the set $\Omega(\bg_0)$ consists of all points in $\Omega$ which one can ``see'' from the point $\bg_0$. Clearly, $\bg_0$ is a best approximation from $\Omega(\bg_0)$ to $\ff$ and the distance of $\ff$ to $\tilde\Omega(\bg_0)$ is larger than $r_0:=\|\ff-\bg_0\|$. Therefore, $\bg_0$ is a best approximation from $\Omega$ to $\ff$. 

\begin{figure}[H]
\centering
\includegraphics[width=0.5\textwidth, height=0.42\textwidth]{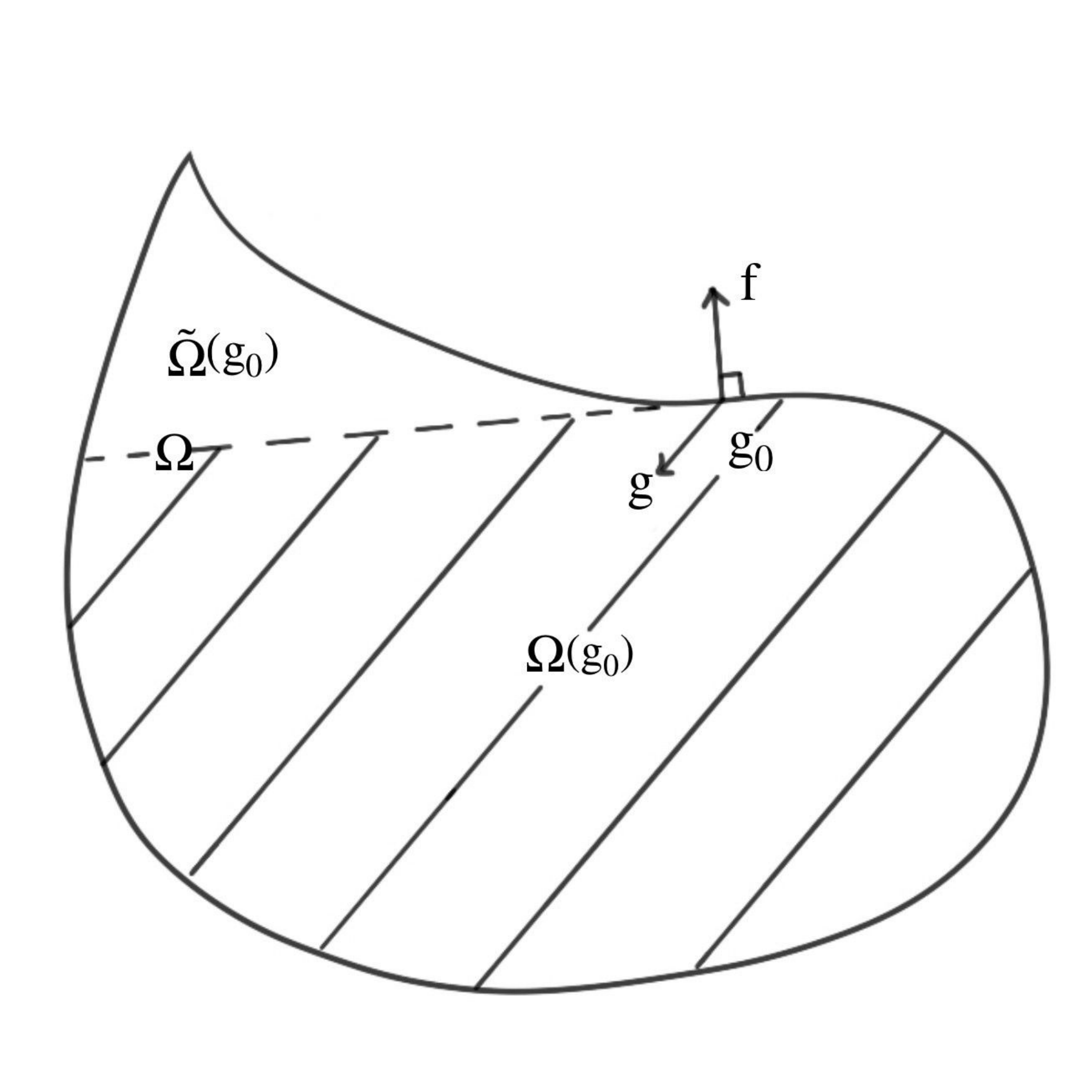}
\caption{Projection to a Nonconvex Set}\label{Projection-to-NonconvexSet}
\end{figure}

We now return to the multi-grade learning model.
In the next theorem, we represent the original function $\ff$ in terms of the ``knowledge'' (neural networks) learned in all grades and show that the norms of the optimal errors are  nonincreasing.

\begin{theorem}\label{TheoremforGG}
Let $\ff\in L_2(\DD, \bR^t)$. The following statements hold:

(i) For all $l\in\bN$,
\begin{equation}\label{ff-expression}
    \ff(\bx)=\sum_{i=1}^l(\cN^*_{k_{i}}\circ\cN^*_{k_{i-1}}\circ\cdots\circ\cN^*_{k_1})(\bx)+\be^*_l(\bx), \ \ \bx\in\bR^s.
\end{equation}

(ii) For all $i\in\bN$,
\begin{equation}\label{Pythagorean-Ext}
    \|\be_i^*\|^2\geq \|\ff_{i+1}\|^2+\|\be^*_{i+1}\|^2.
\end{equation}

(iii) For all $i\in\bN$,
\begin{equation}\label{Nonincreasingness}
    \|\be^*_{i+1}\|\leq \|\be^*_i\|,
\end{equation}
and the sequence $\|\be_i^*\|$, $i\in\bN$, has a nonnegative limit.

(iv) For each $i\in\bN$, either $\ff_{i+1}={\bf 0}$ or 
\begin{equation}\label{SDecreasingness}
    \|\be^*_{i+1}\|< \|\be^*_i\|.
\end{equation}
\end{theorem}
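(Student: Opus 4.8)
The plan is to establish the four parts in the order stated, observing that parts (iii) and (iv) fall out immediately once the ``Pythagorean'' inequality \eqref{Pythagorean-Ext} of part (ii) is in hand, so essentially all the work is in parts (i) and (ii).

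For part (i) I would argue by induction on $l$. The base case $l=1$ is precisely the definition $\be_1^*=\ff-\ff_1$ together with $\ff_1=\cN_{k_1}^*$. For the inductive step, the defining recursion $\be_{i+1}^*=\be_i^*-\ff_{i+1}$ gives $\be_l^*=\ff_{l+1}+\be_{l+1}^*$, and substituting this together with $\ff_{l+1}=\cN_{k_{l+1}}^*\circ\cdots\circ\cN_{k_1}^*$ into the inductive hypothesis $\ff=\sum_{i=1}^l\ff_i+\be_l^*$ yields the formula for $l+1$. This is purely bookkeeping.

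Part (ii) is the heart of the matter. Since $\be_{i+1}^*=\be_i^*-\ff_{i+1}$, expanding the square gives
\[
\|\be_i^*\|^2=\|\be_{i+1}^*+\ff_{i+1}\|^2=\|\be_{i+1}^*\|^2+2\langle\be_{i+1}^*,\ff_{i+1}\rangle+\|\ff_{i+1}\|^2,
\]
so \eqref{Pythagorean-Ext} is equivalent to the single inequality $\langle\be_{i+1}^*,\ff_{i+1}\rangle\ge0$, that is, $\langle\be_i^*-\ff_{i+1},\,\b0-\ff_{i+1}\rangle\le0$. By construction $\ff_{i+1}$ is a best approximation from $\Omega_{i+1}$ to $\be_i^*$, so I would invoke Theorem \ref{nonconvex-aproximation}(i) with $\cH=L_2(\DD,\bR^t)$, $\Omega=\Omega_{i+1}$, target element $\be_i^*$, and $\bg_0=\ff_{i+1}$, evaluated at the test point $\bg=\b0$. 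Two membership facts are needed: first, $\b0\in\Omega_{i+1}$, because setting the output-layer weight matrix and bias of $\cN_{k_{i+1}}$ to zero produces the zero map irrespective of the fixed inner network $\cN_{k_i}^*\circ\cdots\circ\cN_{k_1}^*$; second, $\b0\in\Omega_{i+1}(\ff_{i+1})$, i.e.\ the segment $\{(1-\lambda)\ff_{i+1}:\lambda\in(0,1)\}$ lies in $\Omega_{i+1}$, which holds by scaling \emph{only} the output-layer weight matrix and bias of $\cN_{k_{i+1}}$ by the factor $1-\lambda$ (this scales the network output by $1-\lambda$ and leaves every other parameter, and the inner network, unchanged). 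Theorem \ref{nonconvex-aproximation}(i) then delivers $\langle\be_i^*-\ff_{i+1},\b0-\ff_{i+1}\rangle\le0$, which is exactly what is required. The degenerate case $\be_i^*\in\Omega_{i+1}$, to which Theorem \ref{nonconvex-aproximation} does not apply, I would dispose of by hand: there $\ff_{i+1}=\be_i^*$, $\be_{i+1}^*=\b0$, and \eqref{Pythagorean-Ext} holds with equality.

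Finally, parts (iii) and (iv) are consequences of \eqref{Pythagorean-Ext}: it immediately gives $\|\be_{i+1}^*\|\le\|\be_i^*\|$, and a nonincreasing sequence bounded below by $0$ has a nonnegative limit, which is (iii); and if $\ff_{i+1}\ne\b0$ then $\|\ff_{i+1}\|^2>0$ and \eqref{Pythagorean-Ext} yields $\|\be_{i+1}^*\|^2<\|\be_i^*\|^2$, i.e.\ \eqref{SDecreasingness}, which is (iv). I expect the one genuinely delicate point to be the verification that $\b0$ lies in the \emph{star-shaped} subset $\Omega_{i+1}(\ff_{i+1})$, and not merely in $\Omega_{i+1}$: this is what legitimizes applying the necessary condition of Theorem \ref{nonconvex-aproximation}(i) at $\bg=\b0$, and it rests on the positive-homogeneity of a feedforward network in its output-layer parameters, which keeps the line segment from $\ff_{i+1}$ to $\b0$ inside the otherwise nonconvex set $\Omega_{i+1}$.
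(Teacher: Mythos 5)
Your proposal is correct and follows essentially the same route as the paper: induction for (i), expansion of $\|\be_i^*\|^2=\|\ff_{i+1}+\be_{i+1}^*\|^2$ combined with Theorem \ref{nonconvex-aproximation}(i) applied to $\Omega_{i+1}$ with $\bg_0=\ff_{i+1}$ and test point $\bg=\b0$ for (ii), and (iii)--(iv) as immediate consequences. If anything, you are slightly more careful than the paper, since you explicitly verify via output-layer scaling that $\b0\in\Omega_{i+1}(\ff_{i+1})$ and you separately dispose of the degenerate case $\be_i^*\in\Omega_{i+1}$, which the paper's hypothesis $\ff\in\cH\setminus\Omega$ technically leaves untreated.
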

\begin{proof}
(i) By induction on $l$, we can express the original function $\ff$ as
\begin{equation}\label{ff-expression-0}
    \ff=\sum_{i=1}^l\ff_i+\be^*_l.
\end{equation}
By the $l$-grade learning model, we find for each $i\in\bN_l$ that
$$
\ff_i(\bx):=(\cN^*_{k_{i}}\circ\cN^*_{k_{i-1}}\circ\cdots\circ\cN^*_{k_1})(\bx), \ \ \bx\in\bR^s.
$$
Upon substituting the equation above into the right-hand-side of equation \eqref{ff-expression-0}, we obtain formula \eqref{ff-expression}.

(ii) Note that for all $i\in\bN$, $\be^*_i=\ff_{i+1}+\be^*_{i+1}$.  Hence, we have that
$$
\|\be^*_i\|^2=\left<\ff_{i+1}+\be^*_{i+1}, \ff_{i+1}+\be^*_{i+1}\right>.
$$
Expanding the inner-product on the right-hand-side of the equation above yields that
\begin{equation}
 \label{Identityy}
\|\be^*_i\|^2=\|\ff_{i+1}\|^2+\|\be^*_{i+1}\|^2+2\left<\ff_{i+1}, \be^*_{i+1}\right>.
\end{equation}
We next apply Theorem \ref{nonconvex-aproximation} with $\cH:=L_2(\DD,\bR^t)$,  $\ff:=\be_i^*$, $\bg_0:=\ff_{i+1}$ and $\bg:={\bf 0}$. According to the definition \eqref{Def-Omega(i+1)} of $\Omega_{i+1}$, we see that 
$$
{\bf 0}\in\Omega_{i+1}\ \ \mbox{and}\ \ \lambda {\bf 0}+(1-\lambda)\ff_{i+1}\in\Omega_{i+1}, \ \ \mbox{for all}\ \ \lambda\in (0,1).
$$
Hence, by the definition \eqref{Set-Omega} of set $\Omega_{i+1}(\ff_{i+1})$ with $\Omega:=\Omega_{i+1}$ and $\bg_0:=\ff_{i+1}$, we conclude that ${\bf 0}\in \Omega_{i+1}(\ff_{i+1})$. By Item (i) of Theorem \ref{nonconvex-aproximation}, we observe that 
$$
\left<\ff_{i+1}, \be^*_{i+1}\right>=\left<\ff_{i+1}, \be^*_{i}-\ff_{i+1}\right>\geq 0.
$$
This together with \eqref{Identityy} establishes Item (ii).

(iii) Inequality \eqref{Nonincreasingness} follows directly from Item (ii).
The existence of the limit of the sequence $\|\be_i^*\|$, $i\in\bN$, is a direct consequence of its nonincreasingness  \eqref{Nonincreasingness} and nonnegativity.

(iv) If $\ff_{i+1}\neq {\bf 0}$, then $\|\ff_{i+1}\|>0$.  By inequality \eqref{Pythagorean-Ext}, we obtain \eqref{SDecreasingness}.
\end{proof}

We observe from \eqref{ff-expression} that the network $\ff_l$ learned from the $l$-grade model has much redundancy: $\cN_{k_i}^*$ appears in the network $l-i+1$ times, for $i\in\bN_l$. The redundancy increases significantly the expressiveness of the neural network.

For multi-grade learning of discrete data, we have results similar to those of Theorem 
\ref{TheoremforGG}.

\begin{theorem}\label{TheoremforGG-discrete}
 (i)  There holds for  $l\in\bN$ that
\begin{equation}\label{ff-expression-m}
    \by_k=\sum_{i=1}^l(\cN^*_{k_{i}}\circ\cN^*_{k_{i-1}}\circ\cdots\circ\cN^*_{k_1})(\bx_k)+\be^*_l(\bx_k), \ \ k\in\bN_m.
\end{equation}

(ii) For all $i\in\bN$,
\begin{equation}\label{Pythagorean-Ext-m}
    \|\be_i^*\|_m^2\geq \|\ff_{i+1}\|_m^2+\|\be^*_{i+1}\|_m^2.
\end{equation}

(iii) For all $i\in\bN$,
\begin{equation}\label{Nonincreasingness-m}
    \|\be^*_{i+1}\|_m\leq \|\be^*_i\|_m,
\end{equation}
and the sequence $\|\be_i^*\|_m$, $i\in\bN$, has a nonnegative limit.

(iv) For each $i\in\bN$, either $\ff_{i+1}={\bf 0}$ or 
\begin{equation}\label{SDecreasingness-m}
    \|\be^*_{i+1}\|_m< \|\be^*_i\|_m.
\end{equation}
\end{theorem}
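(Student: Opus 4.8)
The plan is to follow the proof of Theorem \ref{TheoremforGG} line by line, after replacing the $L_2$ inner product by the discrete semi--inner--product attached to the sample points. First I would introduce $\left<\bg,\bw\right>_m:=\sum_{k=1}^m\left<\bg(\bx_k),\bw(\bx_k)\right>_{\ell_2}$, which induces the seminorm $\|\cdot\|_m$ of \eqref{Error-Discrete1}--\eqref{Error-Discretei}, and pass to the quotient of the space of $\bR^t$--valued functions on $\DD$ by the null space of $\left<\cdot,\cdot\right>_m$ --- equivalently, identify a function with the list of its values $\left(\bg(\bx_1),\dots,\bg(\bx_m)\right)$. This quotient is a genuine, finite--dimensional Hilbert space $\cH_m$ in which Cauchy--Schwarz and the Pythagorean identity hold, so Theorem \ref{nonconvex-aproximation} is available there. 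By construction of the discrete model, the image of $\ff_{i+1}$ in $\cH_m$ is a best approximation from the image $\widehat\Omega_{i+1}$ of $\Omega_{i+1}$ to $\be^*_i$, since minimizing \eqref{Error-Discretei} is precisely a best--approximation problem in $\cH_m$; existence of the minimizers is part of the model setup, exactly as in the continuous case.

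Granting this, Item (i) follows by induction on $l$: the base case is $\by_k=\cN^*_{k_1}(\bx_k)+\be^*_1(\bx_k)$, and the inductive step uses the telescoping relation $\be^*_{i-1}(\bx_k)=\ff_i(\bx_k)+\be^*_i(\bx_k)$ together with $\ff_i(\bx_k)=(\cN^*_{k_i}\circ\cdots\circ\cN^*_{k_1})(\bx_k)$ --- it is the argument of Theorem \ref{TheoremforGG}(i) applied at each $\bx_k$. For Item (ii), I would write $\be^*_i=\ff_{i+1}+\be^*_{i+1}$, expand $\|\be^*_i\|_m^2=\|\ff_{i+1}\|_m^2+\|\be^*_{i+1}\|_m^2+2\left<\ff_{i+1},\be^*_{i+1}\right>_m$ as in \eqref{Identityy}, and apply Theorem \ref{nonconvex-aproximation}(i) in $\cH_m$ with $\ff:=\be^*_i$, $\bg_0:=\ff_{i+1}$, $\Omega:=\widehat\Omega_{i+1}$ and $\bg:={\bf 0}$. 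The required facts ${\bf 0}\in\widehat\Omega_{i+1}$ and $\lambda{\bf 0}+(1-\lambda)\ff_{i+1}=(1-\lambda)\ff_{i+1}\in\widehat\Omega_{i+1}$ for $\lambda\in(0,1)$ hold because the output layer of $\cN_{k_{i+1}}$ is affine: multiplying its last weight matrix and bias by $1-\lambda$ realizes $(1-\lambda)\ff_{i+1}$, and by $0$ realizes ${\bf 0}$, inside $\widehat\Omega_{i+1}$. Theorem \ref{nonconvex-aproximation}(i) then gives $\left<\ff_{i+1},\be^*_{i+1}\right>_m\ge 0$, and substituting it into the expansion yields \eqref{Pythagorean-Ext-m}. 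Item (iii) is then immediate: \eqref{Nonincreasingness-m} follows from \eqref{Pythagorean-Ext-m}, and a nonincreasing, nonnegative sequence converges.

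I do not anticipate a substantive obstacle, since the content is the same orthogonality/Pythagorean argument used for Theorem \ref{TheoremforGG}; the only point needing care is Item (iv), and in particular the meaning of ``$\ff_{i+1}={\bf 0}$''. Because $\|\cdot\|_m$ depends on $\ff_{i+1}$ only through its values at $\bx_1,\dots,\bx_m$, the relevant dichotomy is between $\|\ff_{i+1}\|_m=0$ and $\|\ff_{i+1}\|_m>0$; in the latter case \eqref{Pythagorean-Ext-m} gives $\|\be^*_{i+1}\|_m^2\le\|\be^*_i\|_m^2-\|\ff_{i+1}\|_m^2<\|\be^*_i\|_m^2$, which is \eqref{SDecreasingness-m}. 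I would also dispose of the trivial case $\be^*_i\in\widehat\Omega_{i+1}$ separately, so that the hypothesis $\ff\in\cH\setminus\Omega$ of Theorem \ref{nonconvex-aproximation} is not violated: there $\be^*_{i+1}={\bf 0}$ in $\cH_m$ and each claimed inequality holds (with equality in (ii)).
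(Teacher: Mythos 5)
Your proposal is correct and is essentially the argument the paper intends: Theorem \ref{TheoremforGG-discrete} is stated as the discrete analogue of Theorem \ref{TheoremforGG}, and your transfer of that proof to the finite-dimensional Hilbert space of sample values (with the discrete inner product $\left<\cdot,\cdot\right>_m$, the scaling of the affine output layer to place ${\bf 0}$ and the segment to $\ff_{i+1}$ inside $\Omega_{i+1}$, and then the Pythagorean expansion) is exactly the same route. Your extra care in Item (iv) about the seminorm seeing only values at the sample points, and in separating the trivial case $\be^*_i\in\Omega_{i+1}$, are sensible refinements but do not change the approach.
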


Item (i) of Theorem \ref{TheoremforGG-discrete} indicates that for an $l$-grade learning model, the neural network $\sum_{i=1}^l\cN^*_{k_{i}}\circ\cN^*_{k_{i-1}}\circ\cdots\circ\cN^*_{k_1}$ ``interpolates'' the data points $(\bx_k,\by_k)$, for $k\in\bN_m$, up to the error $\be^*_l(\bx_k)$.

Theorems \ref{TheoremforGG} and \ref{TheoremforGG-discrete}
ensure that every grade of a multi-grade model reduces the global approximation error if the network learned from the grade is nontrivial.

\section{Implementation}

We discuss in this section several crucial issues in implementing the multi-grade learning model proposed in the previous sections. These issues include regularization to
overcome over-fitting, removal of output layers for each grade,  design of the grades, and the posterior error for adaptive computation.

Solving the optimization problems involved in the multi-grade learning models described earlier may suffer from over-fitting as solving the one involved in the single-grade learning model does (Rice et al., 2020). When over-fitting occurs, regularization may be necessary. One may use the $\ell_2$ regularization.
We prefer employing the $\ell_1$ regularization, because it can alleviate over-fitting and at the same time  promote the sparsity (Liu et al., 2022). Specifically, we add the vector $\ell_1$-norm of the weight matrix to the minimization problems for our multi-grade learning models. We first recall the definition of the vector $\ell_1$-norm of a matrix. Given a matrix $\bA:=[a_{ij}]\in \bR^{t\times s}$, its vector $\ell_1$-norm is defined by
$$
\|\bA\|_1:=\sum_{i=1}^t\sum_{j=1}^s|a_{ij}|,
$$
the sum of the absolute values of all entries of the matrix.
Note that the vector $\ell_1$ norm, different from the standard $\ell_1$ matrix norm, see (Horn and Johnson, 2012), is the $\ell_1$-norm of a matrix when it is treated as a vector in $\bR^{ts}$.

We next take the minimization problem \eqref{min-i-op-G} as an example to illustrate how  a regularization term is added.
Instead of solving minimization problem  \eqref{min-i-op-G}, we find $\{\bW_{i,j}^*, \bbb_{i,j}^*\}_{j=1}^{k_i}$ by solving the following regularization problem
\begin{equation}\label{Regularization-Problem}
    \min\left\{\sum_{k=1}^m\|\be_i(\{\bW_{j}, \bbb_{j}\}_{j=1}^{k_i};\bx_k)\|_{\ell_2}^2 
+\sum_{j=1}^{k_i}\lambda_j\|\bW_j\|_1: \bW_j\in\bR^{m_j\times m_{j-1}}, \bbb_j\in\bR^{m_j},  j\in\bN_{k_i}\right\},
\end{equation}
where $\lambda_j>0$ are the regularization parameters.
A similar regularization approach was used in (Xu and Zeng, 2022) in training deep neural networks. Parameter choice strategies for the $\ell_1$ regularization proposed in (Liu et al., 2022) is suitable for the case when the fidelity term is convex. One needs to extend the strategies to treat the regularization problem \eqref{Regularization-Problem} when the fidelity term is nonconvex.

In the multi-grade learning models described in section 3, each grade has an output layer for the purpose of defining the error function for the grade. We recommend that after the shallow neural network of the grade is learned, its output layer be removed before entering the next grade of learning. For example, when defining the error function $\be_2(\{\bW_j, \bbb_j\}_{j=1}^{k_2};\bx)$ of grade 2, instead of using \eqref{error2-G}, we employ the modified error function
\begin{equation*}\label{error2-G-6}
    \tilde\be_2(\{\bW_j, \bbb_j\}_{j=1}^{k_2};\bx):=\be_1^*(\bx)-(\cN_{k_2}(\{\bW_j,\bbb_j\}_{j=1}^{k_2};\cdot)\circ\tilde\cN^*_{k_1})(\bx),\ \ \bx\in\bR^s,
\end{equation*}
where $\tilde\cN^*_{k_1}$ is the shallow neural network of grade 1 after removing the output layer and the column size of $\cN_{k_2}(\{\bW_j,\bbb_j\}_{j=1}^{k_2};\cdot)$ is changed accordingly. We make the same modification for the error functions of grades $i>2$.
In this way,  the learning will be more efficient.

Proper defining the ``grades'' 
is vital to the success of a multi-grade learning model.  It depends on specific applications. In general, we should take the following factors into consideration: In each of the grades, we prefer to training a shallow neural network because it is more efficient to learn a shallow neural network than a deep one. However, the more grades we have, the more overhead we pay. The overhead includes output layers of the grades. Hence, we need to balance the depth of the shallow neural networks and the overhead.

In the first a few grades, it is not necessary to learn networks of these grades with high accuracy since the leftover will be learned in future grades. If one tries to learn them with high accuracy in the first a few grades, one may end up with spending too much times on these grades.

One of the advantages of the multi-grade learning is that it is suitable for adaptive computation. The multi-grade learning model naturally allows us to continuously add new grades without starting over which a single-grade learning model would do. This requires the availability of a posterior error. For most of learning problems, the error function for each grade can serve as an innate posterior error. 

Finally, we comment that one can use more than one activation functions to build a neural network. Numerical examples presented in section 6 show the advantages of using different activation functions in one neural network. Different activation functions serve different purposes. Using different activation functions will not change the theoretical results of Theorems
\ref{TheoremforGG} and \ref{TheoremforGG-discrete}.

\section{Numerical Examples}

We present in this section three {\it proof-of-concept} numerical examples to demonstrate the robustness of the proposed multi-grade learning model in comparison to the classical single-grade learning model. We will learn three functions, which are either oscillatory or singular. For each of the functions, we consider both noise free and noisy cases.

All the experiments reported in this section are performed with Python on an Intel Core i7 CPU with 1.80GHz, 16 Gb RAM and 12 Gb NVIDIA GeForce MX150 GPU.

Below, we describe our experiment data for a given function $f$.

\noindent\textbf{Training data}: $\{(x_n, y_n)\}_{n = 1}^N \subset [a, b] \times \mathbb{R}$, where  $N := 5,000$, $x_n$'s are equally spaced on $[a, b]$, and given $x_n$, the corresponding $y_n$ is computed by
$y_n = f(x_n)+e_n$. For the noise free cases, $e_n=0$ and 
for the noisy cases, $e_n$’s are independent and identically distributed Gaussian random variables with 
mean 0 and standard deviation 0.05.

\noindent\textbf{Testing data}: $\{(x'_n, y'_n)\}_{n = 1}^{N'} \subset [a, b] \times \mathbb{R}$, where  $N' := 1,000$,
 $x'_n$'s are equally spaced on $[a, b]$ and
 given $x'_n$, the corresponding $y'_n$ is computed by
$y'_n = f(x'_n)$.

\noindent\textbf{Validation data}: Due to the high frequency of the target function, we prefer to have as much training data as we can. Therefore, instead of leaving out some training data as validation data, we randomly copy $20\%$ of the training data and add Gaussian noise with mean $0$ and  standard deviation $0.01$ to the corresponding $y$ values. The validation data remains the same for both the multi-grade model and the classical single-grade model.

Given predictions $\hat{y}_n$ for $y_n$, the mean squared error for training is defined by
\begin{equation}
\mathrm{mse\ (train)}:= \frac{1}{N}\sum_{n = 1}^N(\hat{y}_n - y_n)^2
\end{equation}
and the relative squared error for training is defined by
\begin{equation}
\mathrm{rse\ (train)}:= \frac{\sum_{n = 1}^N(\hat{y}_n - y_n)^2}{\sum_{n = 1}^Ny_n^2}.
\end{equation}
Likewise, suppose that $\hat{y}_n$ is an approximation for $y'_n$ and we define the mean squared error and the relative squared error on the testing data respectively by
\begin{equation}
\mathrm{mse\ (test)}:= \frac{1}{N'}\sum_{n = 1}^{N'}(\hat{y}_n - y'_n)^2
\end{equation}
and
\begin{equation}
\mathrm{rse\ (test)}:= \frac{\sum_{n = 1}^{N'}(\hat{y}_n - y'_n)^2}{\sum_{n = 1}^{N'}(y'_n)^2}.
\end{equation}

For the three examples, both the noise free and noisy cases, each grade of the multi-grade model is trained with the Adam optimizer and with batch size 32 and a
learning rate decay $1.0 \times 10^{-2}$. For the single-grade learning model, we use three different epochs in training for different examples, aiming at obtaining the best outcome for each individual example. The epochs will be specified in each example. The training time reported in this section is measured in seconds.

\bigskip

\noindent\textbf{Example 1:} 
%
We consider approximating the oscillatory  function
\begin{equation*}
f(x) = \mathrm{sin}(100x),\ x \in [a,b]:=[0, 1].
\end{equation*}
%
For this example, we use the following three grade model:
$$
\begin{aligned}
&\mbox{Grade 1}:\ [1] \to [256] \to [256] \to [1],\\
&\mbox{Grade 2}:\ [1] \to [256]_{\mbox{F}} \to [256]_{\mbox{F}} \to [128] \to [128] \to [64] \to [1],\\
&\mbox{Grade 3}:\ [1] \to [256]_{\mbox{F}} \to [256]_{\mbox{F}} \to [128]_{\mbox{F}} \to [128]_{\mbox{F}} \to [64]_{\mbox{F}} \to [64] \to [32] \to [32] \to [1].
\end{aligned}
$$
Here, $[n]$ indicates a fully connected layer with $n$ neurons, and $[n]_{\mbox{F}}$ indicates $[n]$ with all parameters fixed during training. We choose the activation function for hidden layers in the first grade to be the $\mathrm{sin}$ function and that for the remaining hidden layers to be the ReLU function. No activation function is applied to the last layer of each grade.

\begin{figure}[htp]
\centering
\includegraphics[width=.4\textwidth]{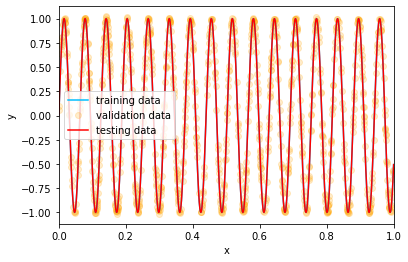}
\caption{Example 1 (noise free): The training, testing and validation data.}
\label{figure_exp1_data}
\end{figure}

The corresponding classic single-grade architecture is given by
$$
[1] \to [256] \to [256] \to [128] \to [128] \to [64] \to [64] \to [32] \to [32] \to [1],
$$
in which the activation function for the first two hidden layers is the $\mathrm{sin}$ function and that for the remaining hidden layers is the ReLU function, the same as in the multi-grade model. No activation is applied to the last layer. The classical single-grade network is trained with Adam: learning rate $0.01$ with decay $1.0\times 10^{-2}$
and batch size 32 for 550 epochs.

\begin{table}[ht]
\caption{Example 1 (noise free): Training time and accuracy for each grade.}
\centering
\begin{tabular}{c|c|c|c|c|c}
\hline
grade&learning rate&epochs&training time &mse (train)&rse (train)\\
\hline
1&$0.1$&$5$&$4.4092$&$1.7961 \times 10^{-2}$&$3.5771 \times 10^{-2}$\\
2&$0.01$&$20$&$14.5209$&$1.1362 \times 10^{-4}$&$2.2629 \times 10^{-4}$\\
3&$0.01$&$40$&$30.0601$&$2.0203 \times 10^{-5}$&$4.0237 \times 10^{-5}$\\
\hline
\end{tabular}
\label{table_exp1_grades}
\end{table}

The training, testing and the validation data for the noise free case of Example 1 are plotted in Figure \ref{figure_exp1_data}. Numerical results for this case are listed in Tables \ref{table_exp1_grades} and \ref{table_exp1}. The corresponding figures are plotted in Figures \ref{figure_exp1_train} and \ref{figure_exp1_test}.

\begin{table}[ht]
\caption{Example 1 (noise free):  multi-grade vs  single-grade.}
\centering
\begin{tabular}{c|c|c|c|c|c}
\hline
method&training time &mse (train)&rse (train)&mse (test)&rse (test)\\
\hline
multi-grade&48.9902&$2.0203 \times 10^{-5}$&$4.0237 \times 10^{-5}$&$2.1450 \times 10^{-5}$&$4.2746 \times 10^{-5}$\\
single-grade&485.7447&$3.1399 \times 10^{-1}$&$6.2534 \times 10^{-1}$&$3.1374 \times 10^{-1}$&$6.2522 \times 10^{-1}$\\
\hline
\end{tabular}
\label{table_exp1}
\end{table}






\begin{figure}
\centering
  \begin{tabular}{c c }
    \includegraphics[width=.4\textwidth]{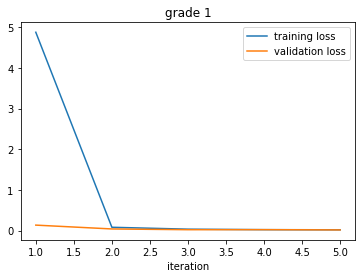} &
    \includegraphics[width=.4\textwidth]{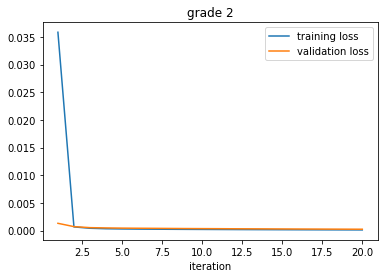}\\
    \includegraphics[width=.4\textwidth]{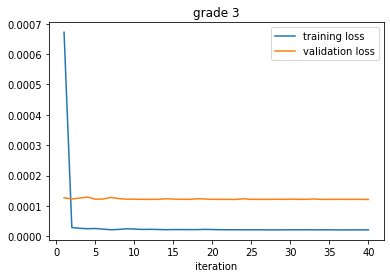} &
    \includegraphics[width=.4\textwidth]{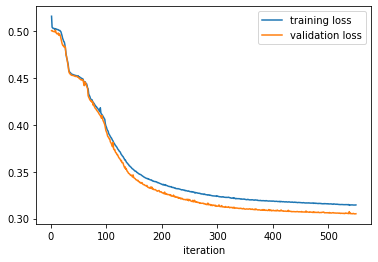}
  \end{tabular}

\caption{Example 1 (noise free): Loss curves during training: (a)-(c) multi-grade; (d) single-grade.}
\label{figure_exp1_train}
\end{figure}

\begin{figure}[htp]
\centering
  \begin{tabular}{c c}
        \includegraphics[width=.4\textwidth]{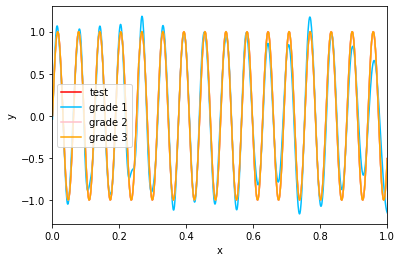} &\includegraphics[width=.4\textwidth]{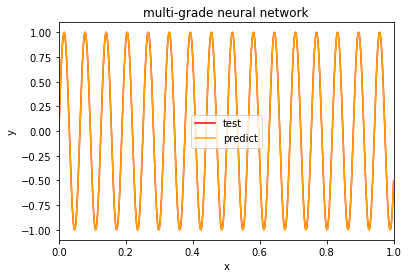}\\

  \end{tabular}

  \begin{tabular}{c}
  \includegraphics[width=.4\textwidth]{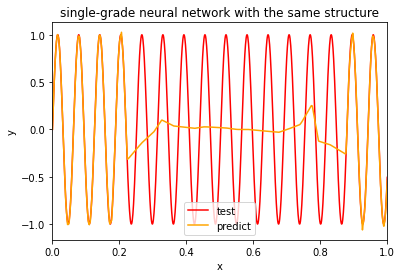}
  \end{tabular}

\caption{Example 1 (noise free) - Predictions on the testing set: (a) after each grade of multi-grade; (b) multi-grade; (c) single-grade.}
\label{figure_exp1_test}
\end{figure}

The training, testing and the validation data for the noisy case of Example 1 are plotted in Figure \ref{figure_exp2_data}.
Numerical results for this case are listed in Tables \ref{table_exp2_grades} and \ref{table_exp2}, and the corresponding figures are plotted in Figures \ref{figure_exp2_train} and \ref{figure_exp2_test}.

\begin{figure}[htp]
\centering
\includegraphics[width=.4\textwidth]{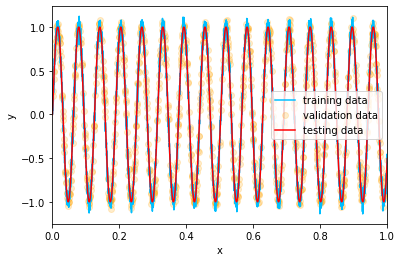}
\caption{Example 1 (noisy case): The training, testing and the validation data.}
\label{figure_exp2_data}
\end{figure}

\begin{table}[ht]
\caption{Example 1 (noisy case): Training time and accuracy for each grade.}
\centering
\begin{tabular}{c|c|c|c|c|c}
\hline
grade&learning rate&epochs&training time &mse (train)&rse (train)\\
\hline
1&$0.1$&$5$&$4.3172$&$4.3381 \times 10^{-2}$&$8.6013 \times 10^{-2}$\\
2&$0.01$&$20$&$13.9507$&$2.5290 \times 10^{-3}$&$5.0143 \times 10^{-2}$\\
3&$0.01$&$40$&$29.8275$&$2.4058 \times 10^{-3}$&$4.7701 \times 10^{-2}$\\
\hline
\end{tabular}
\label{table_exp2_grades}
\end{table}

\begin{table}[ht]
\caption{Example 1 (noisy case):  multi-grade vs  single-grade.}
\centering
\begin{tabular}{c|c|c|c|c|c}
\hline
method&training time &mse (train)&rse (train)&mse (test)&rse (test)\\
\hline
multi-grade&48.0953&$2.4058 \times 10^{-3}$&$4.7701 \times 10^{-3}$&$1.1476 \times 10^{-4}$&$2.2869 \times 10^{-4}$\\
single-grade&474.8579&$3.1994 \times 10^{-1}$&$6.3436 \times 10^{-1}$&$3.1720 \times 10^{-1}$&$6.3213 \times 10^{-1}$\\
\hline
\end{tabular}
\label{table_exp2}
\end{table}

\begin{figure}[htp]
\centering
  \begin{tabular}{cc}
    \includegraphics[width=.4\textwidth]{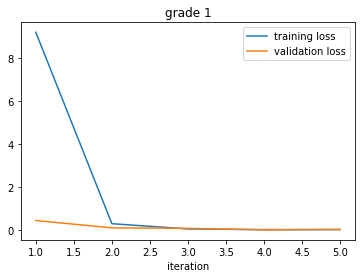} &
     \includegraphics[width=.4\textwidth]{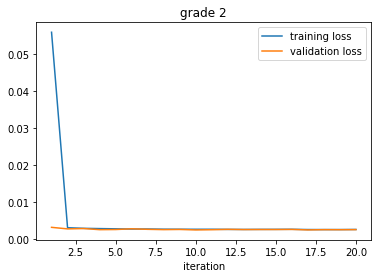}\\     
   \includegraphics[width=.4\textwidth]{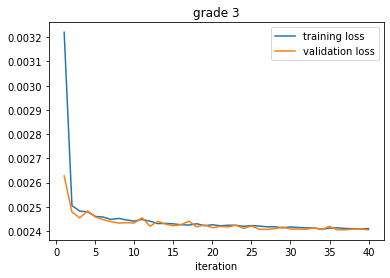} &
    \includegraphics[width=.4\textwidth]{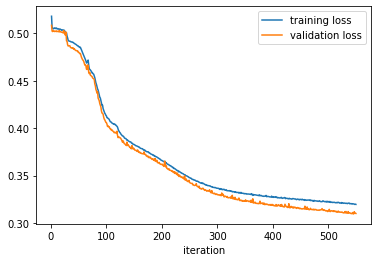}  
  \end{tabular}

\caption{Example 1 (noisy case): Loss curves during training. (a)-(c) multi-grade; (d) single-grade.}
\label{figure_exp2_train}
\end{figure}

\begin{figure}[htp]
\centering
  \begin{tabular}{cc}
    \includegraphics[width=.4\textwidth]{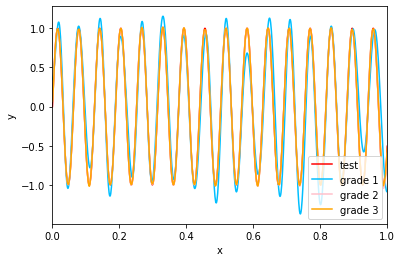}&
    \includegraphics[width=.4\textwidth]{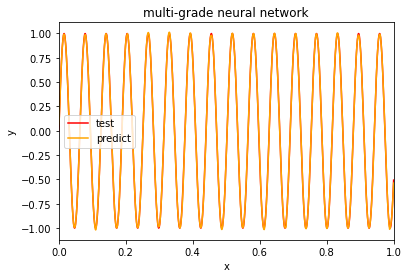}\\
    
  \end{tabular}

  \begin{tabular}{c}
  \includegraphics[width=.4\textwidth]{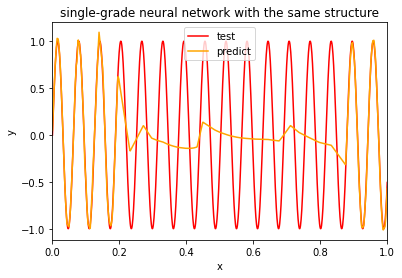}
  \end{tabular}

\caption{Example 1 (noisy case): Predictions on the testing set. (a) after each grade of multi-grade; (b) multi-grade; (c) single-grade.}
\label{figure_exp2_test}
\end{figure}

\bigskip

\noindent\textbf{Example 2:} We consider approximating the oscillatory function with a variable magnitude
\begin{equation*}
f(x) := x\mathrm{sin}(100x),\ \ x \in [a,b]:= [0, 1].
\end{equation*}

\begin{figure}[htp]
\centering
\includegraphics[width=.45\textwidth]{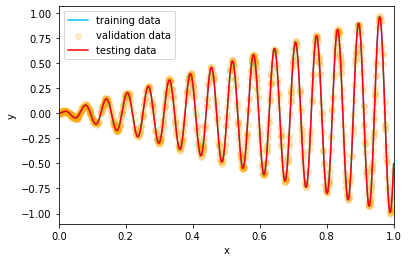}
\caption{Example 2 (noise free): The training, testing and validation data.}
\label{figure_exp3_data}
\end{figure}

For this example, we use a three grade model:
$$
\begin{aligned}
&\mbox{Grade 1}:\ [1] \to [256] \to [256] \to [1],\\
&\mbox{Grade 2}:\ [1] \to [256]_{\mbox{F}} \to [256]_{\mbox{F}} \to [128] \to [64] \to [64] \to [1],\\
&\mbox{Grade 3}:\ [1] \to [256]_{\mbox{F}} \to [256]_{\mbox{F}} \to [128]_{\mbox{F}} \to [64]_{\mbox{F}} \to [64]_{\mbox{F}} \to [64] \to [32] \to [32] \to [1].
\end{aligned}
$$
The activation function for hidden layers in the first grade is the $\sin$ function and the activation function for the rest of hidden layers is the ReLU function. No activation function is applied to the last layer of each grade. The corresponding classic architecture is given by
$$
[1] \to [256] \to [256] \to [128] \to [64] \to [64] \to [64] \to [32] \to [32] \to [1],
$$
in which the activation function for the first two hidden layers is the $\mathrm{sin}$ function and the activation for the rest of hidden layers is the ReLU function. No activation is applied to the last layer. The single-grade network is trained with Adam: learning rate $0.01$ with decay $1.0 \times 10^{-2}$ and batch size $32$ for $800$ epochs.

For the noise free case of Example 2, the
training, testing and the validation data are plotted in Figure \ref{figure_exp3_data}.
Numerical results for this case are listed in Tables \ref{table_exp3_grades} and \ref{table_exp3}, and the corresponding figures are plotted in Figures \ref{figure_exp3_train} and \ref{figure_exp3_test}.

\begin{table}[ht]
\caption{Example 2 (noise free): Training time and accuracy for each grade.}
\centering
\begin{tabular}{c|c|c|c|c|c}
\hline
grade&learning rate&epochs&training time &mse (train)&rse (train)\\
\hline
1&$0.1$&$5$&$4.7529$&$3.1325 \times 10^{-2}$&$1.8554 \times 10^{-1}$\\
2&$0.01$&$20$&$15.1325$&$1.1796 \times 10^{-5}$&$6.9869 \times 10^{-5}$\\
3&$0.01$&$40$&$39.5186$&$3.8516 \times 10^{-6}$&$2.2813 \times 10^{-5}$\\
\hline
\end{tabular}
\label{table_exp3_grades}
\end{table}

\begin{table}[ht]
\caption{Example 2 (noise free):  multi-grade vs  single-grade.}
\centering
\begin{tabular}{c|c|c|c|c|c}
\hline
method&training time &mse (train)&rse (train)&mse (test)&rse (test)\\
\hline
multi-grade&59.4039&$3.8516 \times 10^{-6}$&$2.2813 \times 10^{-5}$&$3.9175 \times 10^{-6}$&$2.3210 \times 10^{-5}$\\
single-grade&734.3698&$7.7438 \times 10^{-3}$&$4.5867 \times 10^{-2}$&$7.7376 \times 10^{-3}$&$4.5842 \times 10^{-2}$\\
\hline
\end{tabular}
\label{table_exp3}
\end{table}

\begin{figure}[htp]
\centering
\begin{tabular}{cc}
    \includegraphics[width=.4\textwidth]{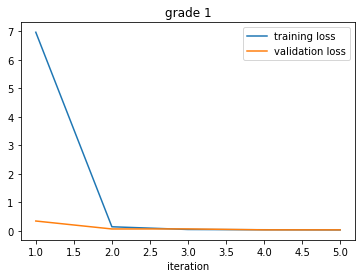} &
    \includegraphics[width=.4\textwidth]{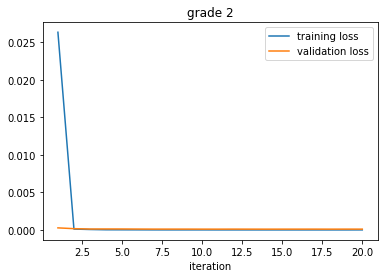}\\
    \includegraphics[width=.4\textwidth]{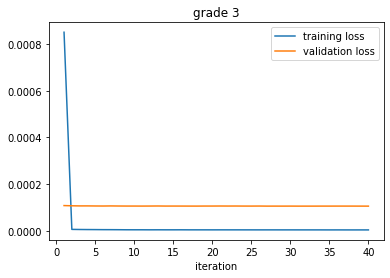} &
    \includegraphics[width=.4\textwidth]{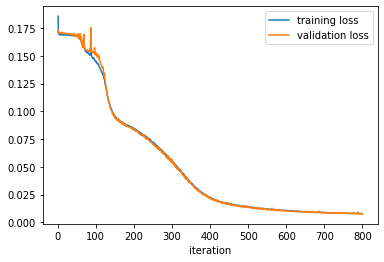}
  \end{tabular}

\caption{Example 2 (noise free) - Loss curves during training: (a)-(c) multi-grade; (d) single-grade.}
\label{figure_exp3_train}
\end{figure}

\begin{figure}[htp]
\centering
\begin{tabular}{cc}
    \includegraphics[width=.4\textwidth]{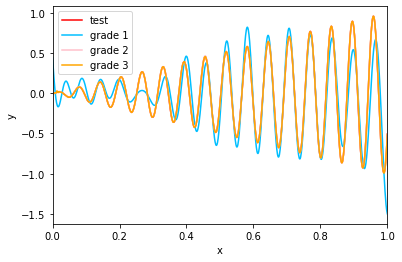} &
    \includegraphics[width=.4\textwidth]{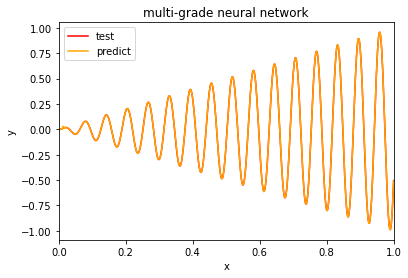}\\
  \end{tabular}
\begin{tabular}{c}
    \includegraphics[width=.4\textwidth]{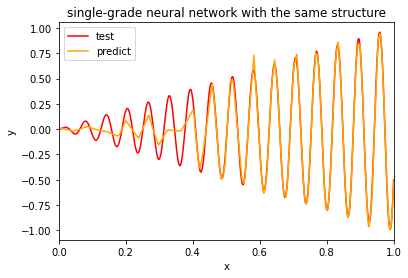}
\end{tabular}  

\caption{Example 2 (noise free) - Predictions on the testing set: (a) after each grade of multi-grade; (b) multi-grade; (c) single-grade.}
\label{figure_exp3_test}
\end{figure}

For the noisy case of Example 2, the training, testing and the validation data are plotted in Figure \ref{figure_exp4_data}. Numerical results for this case are listed in Tables \ref{table_exp4_grades} and \ref{table_exp4}, and the corresponding figures are plotted in Figures \ref{figure_exp4_train} and \ref{figure_exp4_test}.

\begin{figure}[htp]
\centering
\includegraphics[width=.45\textwidth]{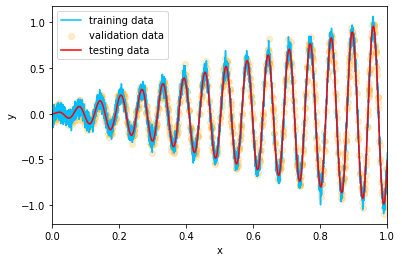}
\caption{Example 2 (noisy case): The training, testing and validation data.}
\label{figure_exp4_data}
\end{figure}

\begin{table}[ht]
\caption{Example 2 (noisy case): Training time and accuracy for each grade.}
\centering
\begin{tabular}{c|c|c|c|c|c}
\hline
grade&learning rate&epochs&training time &mse (train)&rse (train)\\
\hline
1&$0.1$&$5$&$2.9456$&$9.4412 \times 10^{-3}$&$5.5099 \times 10^{-2}$\\
2&$0.01$&$20$&$10.7950$&$2.5191 \times 10^{-3}$&$1.4701 \times 10^{-2}$\\
3&$0.01$&$40$&$24.0130$&$2.4562 \times 10^{-3}$&$1.4334 \times 10^{-2}$\\
\hline
\end{tabular}
\label{table_exp4_grades}
\end{table}

\begin{table}[ht]
\caption{Example 2 (noisy case):  multi-grade vs  single-grade.}
\centering
\begin{tabular}{c|c|c|c|c|c}
\hline
method&training time &mse (train)&rse (train)&mse (test)&rse (test)\\
\hline
multi-grade&37.7535&$2.4562 \times 10^{-3}$&$1.4334 \times 10^{-2}$&$9.8123 \times 10^{-5}$&$5.8133 \times 10^{-4}$\\
single-grade&684.0808&$1.3066 \times 10^{-2}$&$7.6253 \times 10^{-2}$&$1.0483 \times 10^{-2}$&$6.2108 \times 10^{-2}$\\
\hline
\end{tabular}
\label{table_exp4}
\end{table}

\begin{figure}[htp]
\centering

  \begin{tabular}{cc}
    \includegraphics[width=.4\textwidth]{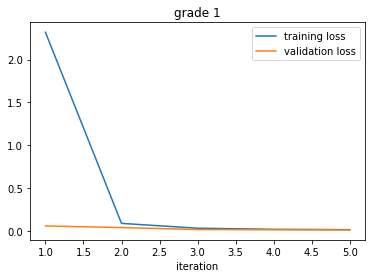} &
    \includegraphics[width=.4\textwidth]{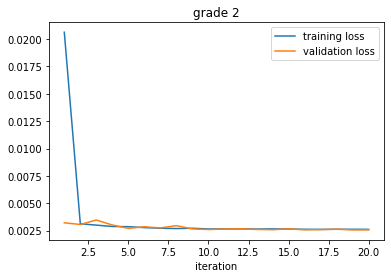}\\
    \includegraphics[width=.4\textwidth]{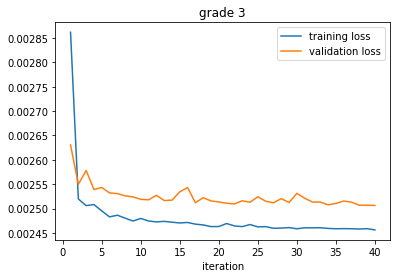}&
    \includegraphics[width=.4\textwidth]{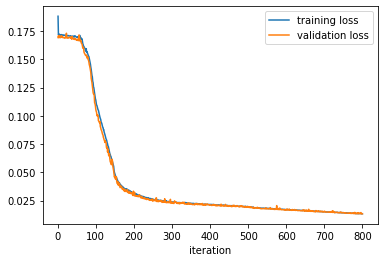}
  \end{tabular} 
\caption{Example 2 - Loss curves during training: (a)-(c) multi-grade; (d) single-grade.}
\label{figure_exp4_train}
\end{figure}

\begin{figure}[htp]
\centering
  \begin{tabular}{cc}
    \includegraphics[width=.4\textwidth]{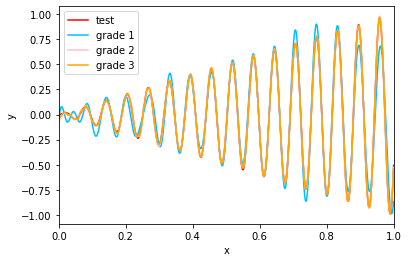} &
    \includegraphics[width=.4\textwidth]{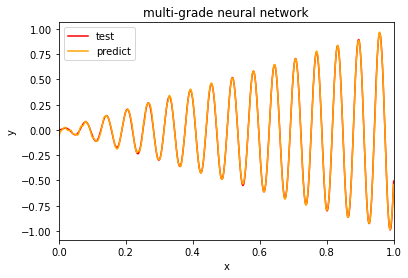}
  \end{tabular}

\begin{tabular}{c}
  \includegraphics[width=.4\textwidth]{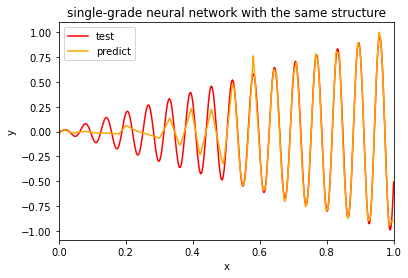}
\end{tabular}
  
\caption{Example 2 - Predictions on the testing set: (a) after each grade of multi-grade; (b) multi-grade; (c) single-grade.}
\label{figure_exp4_test}
\end{figure}

\bigskip

\noindent{\bf Example 3:} In this example, we consider a non-differentiable function
\begin{equation}
f(x) := (x + 1) (f_4 \circ f_3 \circ f_2 \circ f_1)(x),\ x \in [a,b]:= [-1, 1],
\end{equation}
where $(f \circ g)(x)$ means the composition $f(g(x))$ assuming the image of $g$ lies in the domain of $f$, and $f_k$'s are given by
\begin{equation*}
\begin{aligned}
&f_1(x) := \vert\mathrm{cos}(\pi(x - 0.3)) - 0.7\vert,\\
&f_2(x) := \vert\mathrm{cos}(2\pi(x - 0.5)) - 0.5\vert,\\
&f_3(x) := -\vert x - 1.3\vert + 1.3,\\
&f_4(x) := -\vert x - 0.9\vert + 0.9.\\
\end{aligned}
\end{equation*}

\begin{figure}[htp]
\centering
\includegraphics[width=.45\textwidth]{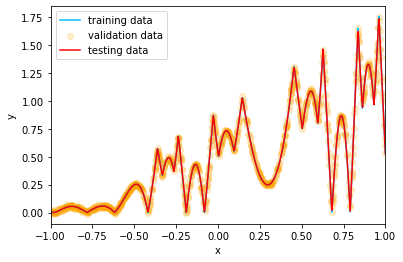}
\caption{Example 3 (noise free): The training, testing and the validation data. }
\label{figure_exp5_data}
\end{figure}

For this example, we use the following three grades:
$$
\begin{aligned}
&\mbox{Grade 1}:\ [1] \to [128] \to [128] \to [1],\\
&\mbox{Grade 2}:\ [1] \to [128]_{\mbox{F}} \to [128]_{\mbox{F}} \to [64] \to [64] \to [64] \to [1],\\
&\mbox{Grade 4}:\ [1] \to [128]_{\mbox{F}} \to [128]_{\mbox{F}} \to [64]_{\mbox{F}} \to [64]_{\mbox{F}} \to [64]_{\mbox{F}} \to [32] \to [32] \to [32] \to [1].
\end{aligned}
$$
The activation function for hidden layers in the first grade is the $\sin$ function and the activation function for the rest of hidden layers is the ReLU function. No activation function is applied to the last layer of each grade. The corresponding classic architecture is given by
$$
[1] \to [128] \to [128] \to [64] \to [64] \to [64] \to [32] \to [32] \to [32] \to [1],
$$
in which the activation function for the first two hidden layers is the $\sin$ function and the activation for the rest of hidden layers is the ReLU function. No activation is applied to the last layer. The single-grade network is trained with Adam: learning rate $0.01$ with decay $1.0 \times 10^{-2}$ and batch size $32$ for $500$ epochs.

For the noise free case of Example 3, the training, testing and the validation data are plotted in Figure \ref{figure_exp5_data}.
Numerical results for this case are listed in Tables \ref{table_exp5_grades} and \ref{table_exp5}. The corresponding figures are plotted in Figures \ref{figure_exp5_train} and \ref{figure_exp5_test}.

\begin{table}[ht]
\caption{Example 3 (noise free): Training time and accuracy for each grade.}
\centering
\begin{tabular}{c|c|c|c|c|c}
\hline
grade&learning rate&epochs&training time &mse (train)&rse (train)\\
\hline
1&$0.1$&$5$&$3.9165$&$2.9048 \times 10^{-3}$&$6.9275 \times 10^{-3}$\\
2&$0.01$&$10$&$6.7180$&$8.8397 \times 10^{-6}$&$2.1081 \times 10^{-5}$\\
3&$0.01$&$40$&$26.5470$&$2.8498 \times 10^{-6}$&$6.7963 \times 10^{-6}$\\
\hline
\end{tabular}
\label{table_exp5_grades}
\end{table}

\begin{table}[ht]
\caption{Example 3 (noise free):  multi-grade vs  single-grade.}
\centering
\begin{tabular}{c|c|c|c|c|c}
\hline
method&training time &mse (train)&rse (train)&mse (test)&rse (test)\\
\hline
multi-grade&37.1816&$2.8498 \times 10^{-6}$&$6.7963 \times 10^{-6}$&$3.0007 \times 10^{-6}$&$7.1605 \times 10^{-6}$\\
single-grade&377.6011&$1.3259 \times 10^{-2}$&$3.1620 \times 10^{-2}$&$1.3243 \times 10^{-2}$&$3.1602 \times 10^{-2}$\\
\hline
\end{tabular}
\label{table_exp5}
\end{table}

\begin{figure}[htp]
\centering

  \begin{tabular}{cc}
    \includegraphics[width=.4\textwidth]{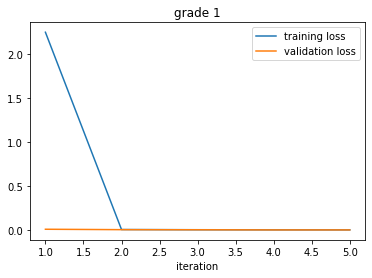} &
     \includegraphics[width=.4\textwidth]{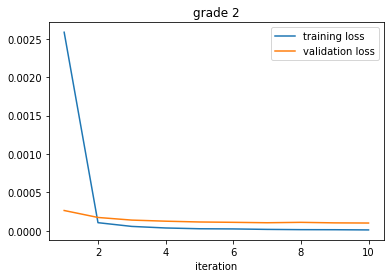}\\
      \includegraphics[width=.4\textwidth]{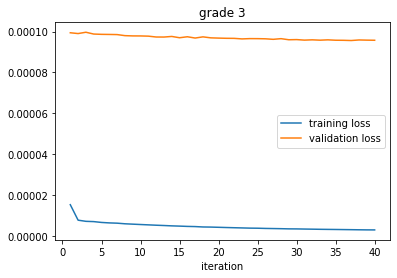}&
    \includegraphics[width=.4\textwidth]{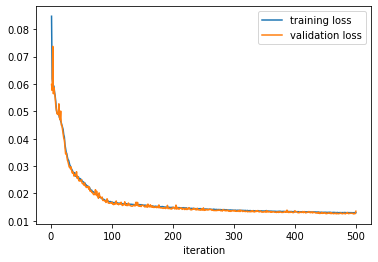}
  \end{tabular}

\caption{Example 3 (noise free) - Loss curves during training: (a)-(c) multi-grade; (d) single-grade.}
\label{figure_exp5_train}
\end{figure}

\begin{figure}[htp]
\centering
  \begin{tabular}{cc}
    \includegraphics[width=.4\textwidth]{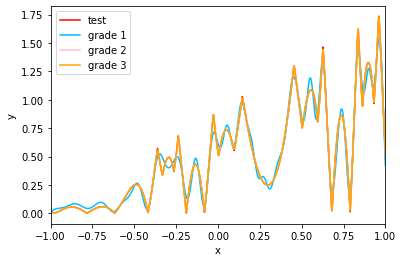} &
    \includegraphics[width=.4\textwidth]{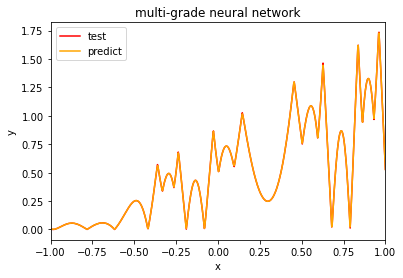}
  \end{tabular}

  \begin{tabular}{c}
    \includegraphics[width=.4\textwidth]{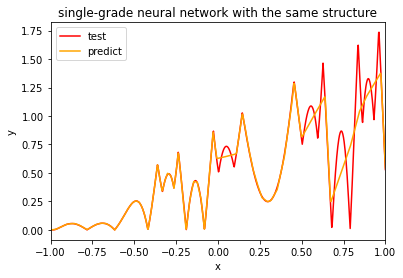}
  \end{tabular}

\caption{Example 3 (noise free) - Predictions on the testing set: (a) after each grade of multi-grade; (b) multi-grade; (c) single-gade.}
\label{figure_exp5_test}
\end{figure}

The training, testing and the validation data for the noisy case of Example 3 are plotted in Figure \ref{figure_exp6_data}. Numerical results for this case are listed in Tables \ref{table_exp6_grades} and \ref{table_exp6}, and the corresponding figures are plotted in Figures \ref{figure_exp6_train} and \ref{figure_exp6_test}.

\begin{figure}[htp]
\centering
\includegraphics[width=.45\textwidth]{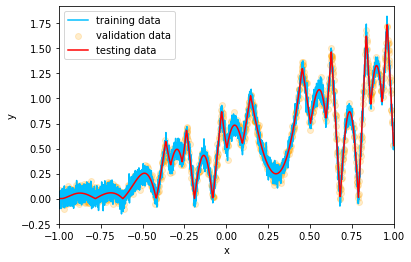}
\caption{Example 3 (noisy case): The training, testing and the validation data. }
\label{figure_exp6_data}
\end{figure}

\begin{table}[ht]
\caption{Example 3 (noisy case): Training time and accuracy for each grade.}
\centering
\begin{tabular}{c|c|c|c|c|c}
\hline
grade&learning rate&epochs&training time &mse (train)&rse (train)\\
\hline
1&$0.1$&$5$&$4.0153$&$2.8773 \times 10^{-2}$&$6.8177 \times 10^{-2}$\\
2&$0.01$&$10$&$6.8856$&$2.6177 \times 10^{-3}$&$6.2027 \times 10^{-3}$\\
3&$0.01$&$25$&$16.3114$&$2.5108 \times 10^{-3}$&$5.9495 \times 10^{-3}$\\
\hline
\end{tabular}
\label{table_exp6_grades}
\end{table}

\begin{table}[ht]
\caption{Example 3 (noisy case):  multi-grade vs  single-grade.}
\centering
\begin{tabular}{c|c|c|c|c|c}
\hline
method&training time &mse (train)&rse (train)&mse (test)&rse (test)\\
\hline
multi-grade&27.2122&$2.5108 \times 10^{-3}$&$5.9495 \times 10^{-3}$&$1.5818 \times 10^{-4}$&$3.7746 \times 10^{-4}$\\
single-grade&406.6714&$1.8728 \times 10^{-2}$&$4.4376 \times 10^{-2}$&$1.6499 \times 10^{-2}$&$3.9372 \times 10^{-2}$\\
\hline
\end{tabular}
\label{table_exp6}
\end{table}

\begin{figure}[htp]
\centering

  \begin{tabular}{cc}
    \includegraphics[width=.4\textwidth]{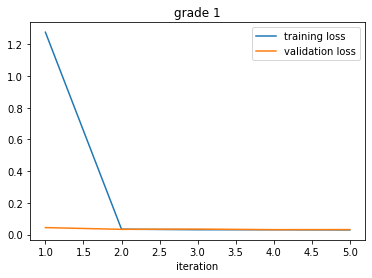} &
     \includegraphics[width=.4\textwidth]{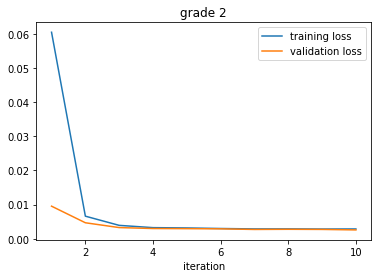}\\
     \includegraphics[width=.4\textwidth]{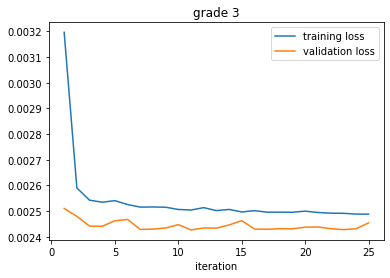}&
     \includegraphics[width=.4\textwidth]{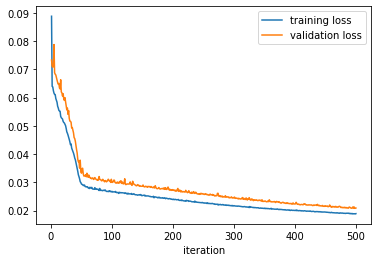}
  \end{tabular}

\caption{Example 3 (noisy case) - Loss curves during training: (a)-(c) multi-grade; (d) single-grade.}
\label{figure_exp6_train}
\end{figure}

\begin{figure}[htp]
\centering
  \begin{tabular}{cc}
    \includegraphics[width=.4\textwidth]{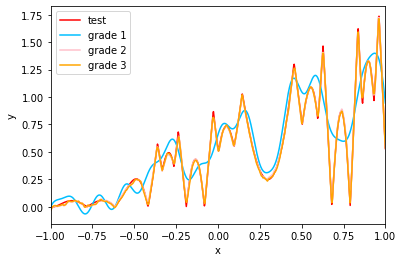}&
    \includegraphics[width=.4\textwidth]{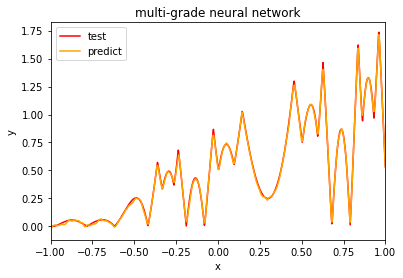}\\
    
  \end{tabular}

  \begin{tabular}{c}
  \includegraphics[width=.4\textwidth]{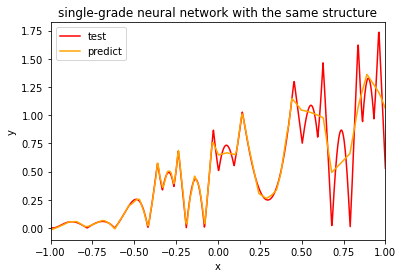}
  \end{tabular}
\caption{Example 3 (noisy case) - Predictions on the testing set: (a) after each grade of multi-grade; (b) multi-grade; (c) single-grade.}
\label{figure_exp6_test}
\end{figure}

We observe from the numerical results presented in this section that for all examples (for both noise free and noisy cases) the multi-grade model outperforms significantly the single-grade model in terms of training time, training accuracy and prediction accuracy in all counts. Moreover, multi-grade models are much easier to train than their corresponding single-grade models.

\section{Conclusive Remarks}
To address computational challenges in learning deep neural networks, we have proposed multi-grade learning models, inspired by the human education process which arranges learning in grades. Unlike the deep neural network learned from the standard single-grade model, the neural network learned from a multi-grade model is the superposition of the  neural networks, in a {\it stair-shape}, each of which is learned from one grade of the learning.  We have proved that for function approximation if the neural network generated by a new grade is nontrivial, the optimal error of the grade is strictly reduced from the optimal error of the previous grade. Three proof-of-concept numerical examples presented in the paper demonstrate that the proposed  
the multi-grade deep learning model outperforms significantly the single-grade deep learning model in all aspects, including training time, training accuracy and prediction accuracy. Our computational experiences show that the multi-grade model is much easier to train than the corresponding
single-grade model. We conclude that the proposed  multi-grade model is much more robust than the traditional single-grade model. 
\bigskip

\noindent {\bf Acknowledgement:}
The author is indebted to graduate student Mr. Lei Huang for his assistance in preparation of the numerical examples presented in section 6 and to Dr. Tingting Wu for helpful discussion of issues related to approximation of the oscillatory function. 
The author of this paper is supported in part by US National Science Foundation under grants DMS-1912958  and DMS-2208386, and by the US National Institutes of Health under grant R21CA263876.

\end{document}